\newtheorem{proposition}{Proposition}
\newtheorem{theorem}{Theorem}
\newtheorem{corollary}{Corollary}
\newtheorem{lemma}{Lemma}
\newtheorem{remark}{Remark}
 \newtheorem{example}{Example}
\newenvironment{examplecont}
  {\addtocounter{example}{-1}\begin{example}[continued]}
  {\end{example}}
\newcommand{\R}{\mathbb{R}}
\newcommand{\crit}{\mathrm{crit}\,}
\newcommand{\argminloc}{\mathrm{argmin\text{-}loc}\,}
\newcommand{\jac}{\mathrm{Jac}\,}
\newcommand{\gap}{\ell}
\newcommand{\zmaj}{Z_{\rm maj}\,}
\newcommand{\zdisc}{Z_{\rm maj\text{-}adv}\,}
\newcommand{\zminadv}{Z_{\rm min\text{-}adv}\,}
\newcommand{\zdiscc}{Z^c_{\rm maj\text{-}adv}\,}
\newcommand{\dist}{\mathrm{dist}\,}
\newcommand{\bd}{\mathrm{bdry}\,}
\newcommand{\disth}{\dist_{\rm H}\,}
\newcommand{\tstereo}{t_{\,\rm stereotype}}
\newcommand{\tcatche}{t_{\,\rm catchup,\epsilon}}
\newcommand{\kcatche}{k_{\,\rm catchup}}
\title{When majority rules, minority loses: bias amplification of gradient descent}
\author{
 François Bachoc \\
University of Lille \\
Institut Universitaire de France (IUF) \\
\texttt{francois.bachoc@univ-lille.fr}
\And 
Jérôme Bolte \\
Toulouse School of Economics \\
ANITI \\
\texttt{jerome.bolte@tse-fr.eu}
\And 
Ryan Boustany \\
Toulouse School of Economics \\
\texttt{ryan.boustany@tse-fr.eu}
  \And
  Jean-Michel Loubes \\
  Université de Toulouse \\
ANITI \& Regalia INRIA \\
  \texttt{jean-michel.a.loubes@inria.fr}
}
\newcommand{\jm}[1]{[[\textcolor{blue}{#1}]]}
\newcommand{\jer}[1]{[[\textcolor{magenta}{#1}]]}
\newcommand{\fra}[1]{[[\textcolor{cyan}{FB: #1}]]}
\begin{document}
\maketitle
 
\begin{abstract}
Despite growing empirical evidence of bias amplification in machine learning, its theoretical foundations remain poorly understood. We develop a formal framework for majority-minority learning tasks, showing how standard training can favor majority groups and produce 
stereotypical predictors that neglect minority-specific features. Assuming population and variance imbalance, our analysis reveals three key findings: (i) the close proximity between "full-data" and stereotypical predictors, (ii) the dominance of a region where training the entire model tends to merely learn the majority traits, and (iii) a lower bound on the additional training required. 
Our results are illustrated through experiments in deep learning  for tabular and image classification tasks.

\end{abstract}


\section{Introduction}

Imbalanced data are pervasive in machine learning, spanning rare-event detection, fraud, faults, medical anomalies, security, finance, and modern LLM pipelines with unequally represented subpopulations, see e.g.,  \cite{JohnsonKhoshgoftaar2019,adam_imbalance_langage} for some references. A sensitive case arises in fairness-related applications, where decisions apply to human beings \cite{barocas2018fairness,chouldechova2020snapshot,del2020review,oneto2020fairness}. Addressing this issue is increasingly important notably under regulatory frameworks such as the European Union’s AI Act, which emphasizes non-discrimination and risk mitigation. 

In all these settings, the goal is to learn predictors that genuinely capture minority structure.

Our focus is on scenarios with two distinctive characteristics. First, the imbalance is typically significative, as we work directly with raw data without resampling or augmentation. Second, the imbalance is not corrected at the data level but addressed only through the training dynamics of 
gradient descent. An empirical fact, well known to practitioners, is that imbalance is not only preserved but often amplified by training: models initially align with the majority component and only later start to capture minority features. In the fairness literature, this is sometimes referred to as bias amplification \cite{besse2022survey,hall2022systematic,wang2021directional,zhao2023men,zhao-etal-2017-men}. Related simplicity-driven behaviors have been observed in representation learning \cite{bell2023simplicity,geirhos2018imagenet,hermann2020origins,shah2020pitfalls,vasudeva2024simplicity}.

This phenomenon has been documented since the 1990s in the class imbalance literature, see \cite{anand93}, which motivated numerous work and heuristics: data-level remedies (oversampling, under-sampling, synthetic examples \cite{Chawla02,Sun2007}), algorithm-level adjustments \cite{GD_class_imbalance}, cost-sensitive learning \cite{Elkan2001}, focal and reshaped losses \cite{Lin2017Focal,Pouyanfar2018}. With the advent of deep learning, the issue became even more acute, as high-capacity models and standard training budgets (a few hundred epochs) tend to privilege majority signals \cite{JohnsonKhoshgoftaar2019}. Despite this history, the mathematical mechanisms of imbalance amplification are still poorly  understood geometrically, especially in nonlinear nonconvex regimes \cite{sagawa2020investigation}. Our goal is to clarify these mechanisms which are essential  for sensitive applications.

Using Kantorovich-type arguments, we develop a theoretical and a geometrical framework that explains why and how gradient-based training first produces stereotypical predictors, i.e. aligned merely with the majority, before catching-up and entering a debiasing phase where minority features start to influence predictions. 

\noindent
{\bf Contributions.}
They are as follows:\\
--- We first formalize the problem as a generic majority-minority learning task $\min L := L_1 + L_0$, with $L_0 \ll L_1$ using second-order differentiability domination. We prove that each critical point of $L$, which corresponds to a predictor, can be paired with a critical point of $L_1$, termed \emph{stereotypical predictor} (\Cref{sec:perturbations}). We bound their distance: it is what we call the \emph{stereotype gap}. \\
--- The proximity of $L$ and $L_1$ implies that the region where minimizing $L$ is `equivalent' to minimizing $L_1$  occupies nearly the entire parameter space (\Cref{sec:majority-train}). This results in a close overlap between $L_1$ training gradient path and the actual training path, illustrating how standard training neglects minority-specific characteristics (\Cref{sec:stereotypical:training:curve}). This proximity between learning curves, as the proximity between representative and stereotypical predictor, is somehow deceptive, since the minority features lie precisely in what differentiates them.\\
--- We prove that gradient descent may require a fairly long training time to merely identify  stereotypical predictors, ignoring minority-specific aspects (\Cref{sec:train-phase}). A common training failure is when a long initial training phase stalls at a stereotypical predictor. Although this predictor lies close to its corresponding representative predictor, escaping that neighborhood, and thus debiasing the model, often requires much more training because the gradients there are tiny. 
  We derive a lower bound on this extra training duration. The corresponding ratio is called the \emph{catch-up overcost ratio}. It is a debiasing overcost. It quantifies the additional training time required to achieve unbiased predictions.\\
--- We illustrate our theoretical findings through numerical experiments on tabular and image-classification tasks with deep neural networks (\Cref{sec:experiments}). Minority awareness emerges in preliminary experiments and appears linked to training duration; it also persists under alternative learning strategies as AdamW or XGBoost.


\noindent
{\bf Related literature.} The bias amplification phenomenon is well documented experimentally, see, for example, \cite{bell2023simplicity,hall2022systematic,sagawa2020investigation} and references therein. Yet few theoretical results clarify its causes. The earliest paper we are aware of that addresses the issue is \cite{anand93} via a diagnostic of an early-phase majority bias and via some algorithmic fix (bisect classwise gradients) with empirical speedups. In a Gaussian setting with ridge regression, \cite{subramonian2024effective} show that, for a single pooled model, the between-group gap in expected test risk can exceed the corresponding gap obtained by training separate models for each group. Leveraging the closed form of the ridge estimator, they analyze the asymptotic behavior of this bias-amplification measure. In a related direction, \cite{mannelli2022bias} introduce a parametric Gaussian-mixture framework with tunable imbalance and derive analytic ridge-regression solutions, comparing group-wise risks for a jointly trained model versus per-group models. All these results rely on analytic solutions and their asymptotics. Complementing this line, \cite{GD_class_imbalance} analyze optimization dynamics and articulate theoretical conditions that clarify a phenomenon they term minority initial drop (MID) -- an early deterioration of minority recall driven by majority-dominated gradients. They also provide sufficient conditions for monotone per-class loss decrease and show that vanilla (stochastic) gradient descent  can be sub-optimal under imbalance. Their perspective focuses on loss trajectories and per-class gradients, rather than the parameter-space geometry and time-to-learn bounds we develop below. In this sense, their results are complementary to ours, and a general, model-agnostic theory of bias amplification in modern ML remains largely open.

\noindent
{\bf Notations.} 
Notations on matrices, differential calculus and geometry, that are used throughout the paper,
can be found in \Cref{not}.

\section{Predictions for majority-minority problems in machine learning}
\label{sec:general:distance:critical}

We first present our majority-minority scenario in \Cref{subsec:stat} as a minimization problem:
$\min L := L_1 + L_0.$ We aim at estimating the distance between a predictor obtained by minimizing the total loss $L$ and a neighboring majority-based predictor obtained by minimizing $L_1$. In practice, the latter may represent a biased or stereotyped view that a user holds about the underlying problem. We show that a small population and low variance for the minority group lead to proximity between the predictor and the majority-based predictor, making them difficult to distinguish. Our results are first presented for abstract equations (\Cref{prop:general:F}) and general variational problems (\Cref{thm:general:F}); discussions on learning appear in \Cref{sub:MLView} and in the Appendix.

\subsection{The setting: majority-minority model and generic losses}\label{subsec:stat}
{\bf A majority-minority model.} We consider $n$ observations of a variable $Z:=(X,Y) \in \mathbb{R}^d \times \mathbb{R}$ ($d>0$) that can be divided into two groups following the values of a binary variable $A \in \{0,1\}$. 
In our scenario, the data are unbalanced: there is a majority group $A=1$ (with cardinality denoted $n_1$) and a minority group $A=0$ (cardinality $n_0$), typically with $n_0 \ll n_1$. This heterogeneity, i.e., the variable $A$, may be unknown to the user. 

Consider a collection of models or predictors $f_\theta : \mathbb{R}^d \mapsto \mathbb{R}$  indexed by parameters or weights $\theta \in \R^d$ that are learned by minimizing some empirical loss function over the learning set. 
Given a  discrepancy measure $\gap:\R^2\to \R_+$ we may define the total, majority and minority losses as: for $\theta \in \R^d $,   
\begin{align*}
 L(\theta) :=  
      \frac{1}{n} 
\sum_{i=1}^n 
\gap(f_\theta (X_i),Y_i)
 = 
 \underbrace{
  \frac{1}{n} 
\sum_{\substack{i=1,\dots,n \\ A_i = 1 }} 
\gap(f_\theta (X_i),Y_i)
 }_{:=L_1(\theta)} 
 +
  \underbrace{
  \frac{1}{n} 
\sum_{\substack{i=1,\dots,n \\ A_i = 0 }} 
\gap(f_\theta (X_i),Y_i)
 }_{:=L_0(\theta)}. 
\end{align*}
In the training phase of a learning process, the parameters are often computed through first order methods and thus eventually through vanishing gradients. Assuming both $\gap$ and $f_\theta$ are differentiable, we are thus led to consider equations of the form: 
$ \nabla L(\theta)=0, \quad \nabla L_{j}(\theta)=0, $ for $j\in\{0,1\}$.
In a strongly imbalanced scenario, $L_0$ may become negligible with respect to $L_1$, so that the equations $\nabla L_1 = 0$ and $\nabla L = 0$ have very close solutions. On the other hand, this proximity does not prevent solutions to the equation $\nabla L_1(\theta) = 0$ from producing biased or stereotyped predictors as they ignore, by definition, the influence of data underlying $L_0$.

The aim of the following sections is to study this phenomenon and provide a set of assumptions for estimating the distance between full-data and stereotypical predictors.

{\bf Generic losses.} For the rest of the article, we adopt a genericity perspective on loss functions by assuming that their critical points are non-degenerated. For $G:\R^d\to\R$ twice differentiable this means that 
$$\nabla G(\theta)=0\Rightarrow \nabla^2G(\theta) \mbox{ is invertible}.$$
In other words, $G$ is a {\em Morse function}.  These functions are generic in the sense that they form an open dense subset in $C^k(\R^d,\R)$ for the $C^2$ topology whenever $k\geq 2$, see e.g., \cite{golubitsky2012stable}.

In the machine learning perspective, this is not extremely demanding as, for a fixed $C^2$ function $G$,  perturbations of the form $\mathbb{R}^n \ni x \mapsto G_{\gamma,\epsilon}(x)=G(x)+\gamma \|x-\epsilon\|^2$ with $\gamma>0$ are Morse for almost all couple $(\gamma,\epsilon)\in \R_+\times\R^n$ -- actually it holds true with linear perturbations, see e.g., \cite{shastri2011elements}.  This approach aligns with statistical and learning practices, both through ridge regularization (pioneered in \cite{hoerl1970ridge} whose use in data science is developed for instance in \cite{hastie2020ridge}, and references therein)  and the weight decay approach in deep learning \cite{bottman2023regularization}.

\subsection{Perturbation results for critical points of generic  losses}\label{sec:perturbations}

Assume $L=L_1 + L_0$ is a general cost. The spirit of the following results is that $L_1$ corresponds to a majority  behavior while $L_0$ is attached to minority features, for instance as in  the  scenario of \Cref{subsec:stat}.  In an analytical setting, it translates into a property of the type: $L_0$ is negligible w.r.t $L_1$ (see the assumptions below). We then aim at comparing $\crit L$ and $\crit L_1$; $\argminloc L$ and $\argminloc L_1$\footnote{Recall that notations are provided in \Cref{not}.}. Note that  the theorem below is a general-purpose perturbation result, it is  applied in a machine learning setting in the remaining sections.

\begin{theorem}[Strong imbalance and  critical points] \label{thm:general:F}

Consider two functions $L_1$ and $L_0$ from $\R^d$ to $\mathbb{R}$ that are two times continuously differentiable, and a subset $K \subset \R^d$.

Assume that there are strictly positive numbers $\delta,c,M,\tau$ such that
\begin{itemize}
    \item  Strong Morse property: For all $\theta \in K$, \begin{equation}
\label{eq:general:f:un}
\| \nabla L_1(\theta) \| \leq c
~ \Longrightarrow ~
\rho_{\min}
( \nabla^2 L_1(\theta) )
\geq \delta,
~ ~ ~ ~ ~~ ~
\end{equation}

\item Lipschitz regularity: for all $\theta_1 , \theta_2 \in K$
\begin{equation}
\label{eq:general:f:trois}
     \rho_{\max} \left( \nabla^2L_1(\theta_1) - \nabla^2L_1(\theta_2) \right)
\le 
M \|\theta_1 - \theta_2 \|, 
~ ~ ~ ~ ~ ~
~ ~ ~ ~ ~ ~
\end{equation}
\begin{equation}
\label{eq:general:f:trois:fdeux}
     \rho_{\max} \left( \nabla^2L_0(\theta_1) - \nabla^2L_0(\theta_2) \right)
\le 
M \|\theta_1 - \theta_2 \|, 
~ ~ ~ ~ ~ ~
~ ~ ~ ~ ~ ~
\end{equation}
\item Bounds on the `minority loss': 
\begin{equation}
\label{eq:general:f:quatre}
\sup_{\theta \in K}
\| \nabla L_0(\theta) \|
\leq \tau,
\end{equation}
\begin{equation}
\label{eq:general:f:cinq}
\sup_{\theta \in K}
\rho_{\max} 
\left( 
\nabla^2L_0(\theta)
\right)
\leq \tau.
\end{equation}
\end{itemize}

Assume further that
\begin{align}
&\label{eq:cond:tau:deux}
\tau < \min\left\{
\frac{c}{2}
, 
\frac{\delta}{8}
,
\frac{\delta^2}{32 M}
\right\},\\ 
 \label{eq:no:crit:close:border}
 &   \disth 
(\crit L_1 \cap K
,
\bd K) 
\ge \frac{6 \tau}{\delta},
~ ~ ~ ~
    \disth 
(\crit L \cap K
,
\bd K) 
\ge \frac{6 \tau}{\delta}.
\end{align}

Then, for each $\widehat{\theta}_1 \in \crit L_1 \cap K$ (resp. $\widehat{\theta} \in \crit L \cap K$) there exists a unique corresponding $\widehat{\theta} \in \crit L \cap K$ (resp. $\widehat{\theta}_1 \in \crit L_1 \cap K$)  such that
\[
\| \widehat{\theta}_1 - \widehat{\theta} \| 
\le 
\frac{4 \tau}{\delta}
\]
and $\widehat{\theta},\widehat{\theta}_1$ have the same indexes, that is the same number of strictly negative eigenvalues of the Hessian matrices $\nabla^2 L_1(\widehat{\theta}_1)$ and $\nabla^2 L(\widehat{\theta})$. 
\end{theorem}

\begin{corollary}[Distances between critical and local minimizer sets] \label{cor:indexes:morse}
In the context of Theorem \ref{thm:general:F}, if $\crit L_1 \cap K$ is non-empty, then $\crit L \cap K$ is non-empty and we have
\begin{equation} \label{eq:critical:points:close}
\disth 
(\crit L_1 \cap K, \crit L \cap K)
\leq 
  \frac{4 \tau}{\delta }. 
\end{equation}
Also, if $\argminloc L_1 \cap K$ is non-empty then $\argminloc L \cap K$ is non-empty and we have
\begin{equation} \label{eq:argminloc:close}
\disth
(\argminloc L_1 \cap K, \argminloc L \cap K)
\leq 
  \frac{4 \tau}{\delta }. 
\end{equation}
Finally, for each $\theta \in \argminloc L_1 \cap K$, there is $\theta' \in \argminloc L \cap K$ such that the ball $B(\theta' , \frac{6 \tau}{\delta})$ contains $\theta$, and $L$ is $\delta/8$ strongly convex on this ball.  
\end{corollary}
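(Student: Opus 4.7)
The plan is to derive all three claims from Theorem~\ref{thm:general:F}(i) together with an elementary Hessian perturbation estimate (Weyl's inequality). The first claim is essentially immediate: Theorem~\ref{thm:general:F}(i) provides a bijective pairing between $\crit L_1$ and $\crit L$ with every pair within distance $4\tau/\delta$, so $\crit L$ is non-empty as soon as $\crit L_1$ is, and the bijection directly yields \eqref{eq:critical:points:close}.

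For the second claim, I would show that the bijection from (i) restricts to one between local minimizers. The strong Morse property \eqref{eq:general:f:un} implies that a local minimizer $\theta$ of $L_1$ is a critical point with $\nabla^2 L_1(\theta) \succeq \delta I$. By (i), its partner $\theta' \in \crit L$ has the same Hessian index, so $\nabla^2 L(\theta')$ has no negative eigenvalue. To upgrade this to $\theta' \in \argminloc L$ I would invoke Weyl's inequality: since $\|\theta - \theta'\| \leq 4\tau/\delta$, \eqref{eq:general:f:trois} gives $\nabla^2 L_1(\theta') \succeq (\delta - 4M\tau/\delta)\, I$, and then \eqref{eq:general:f:cinq} yields
\[
\nabla^2 L(\theta') = \nabla^2 L_1(\theta') + \nabla^2 L_0(\theta') \succeq \left(\delta - \tfrac{4M\tau}{\delta} - \tau\right) I.
\]
Condition \eqref{eq:cond:tau:deux} makes the right-hand side at least $3\delta/4 > 0$, so $\theta'$ is a strict local minimizer. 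The same argument starting from a local minimizer of $L$ and its partner in $\crit L_1$ goes through symmetrically, yielding \eqref{eq:argminloc:close}.

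For the third claim, fix $\theta \in \argminloc L_1$ with partner $\theta' \in \argminloc L$. Since $\|\theta - \theta'\| \leq 4\tau/\delta < 6\tau/\delta$, the ball $B(\theta', 6\tau/\delta)$ contains $\theta$. For any $\eta$ in this ball the triangle inequality gives $\|\eta - \theta\| \leq 10\tau/\delta$, and the same perturbation recipe as above, this time applied to $\nabla^2 L_1(\theta) \succeq \delta I$, yields
\[
\nabla^2 L(\eta) \succeq \left(\delta - \tfrac{10 M \tau}{\delta} - \tau\right) I.
\]
Plugging in $\tau < \delta/8$ and $\tau < \delta^2/(32M)$ from \eqref{eq:cond:tau:deux} keeps this above $9\delta/16 \geq \delta/8$, which is the desired strong convexity modulus.

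The delicate part, as often in such perturbation lemmas, is the bookkeeping of the constants in \eqref{eq:cond:tau:deux}: the bound $\tau < \delta/8$ absorbs the operator norm of $\nabla^2 L_0$, while $\tau < \delta^2/(32 M)$ is calibrated to control the Hessian drift of $L_1$ across balls of radii $4\tau/\delta$, $6\tau/\delta$, and $10\tau/\delta$ arising in the three claims. The radius $6\tau/\delta$ in the final claim appears to be chosen precisely so that the combined estimate comfortably beats $\delta/8$. No other step is nontrivial: the whole argument is essentially Theorem~\ref{thm:general:F}(i) iterated with a Weyl estimate at three slightly different scales.
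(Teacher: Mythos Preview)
Your proof is correct and takes essentially the same approach as the paper: Theorem~\ref{thm:general:F}(i) for the pairing and index preservation, combined with a Weyl-type Hessian perturbation to certify nondegeneracy and the strong-convexity modulus. The only cosmetic difference is that the paper anchors the final estimate at $\theta'$, invoking the intermediate bounds $\rho_{\min}(\nabla^2 L(\theta')) \ge \delta/2$ and the $2M$-Lipschitz constant of $\nabla^2 L$ established inside the proof of Theorem~\ref{thm:general:F}, whereas you anchor at $\theta$ and decompose $\nabla^2 L = \nabla^2 L_1 + \nabla^2 L_0$; both bookkeepings clear the $\delta/8$ threshold with room to spare.
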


\paragraph{Comments on \Cref{thm:general:F} and \Cref{cor:indexes:morse}.} Assumptions \eqref{eq:general:f:trois}–\eqref{eq:cond:tau:deux} are warranted whenever $\nabla L_0$ is $C^1$–small on $K$, i.e., small with respect to the functional semi-norm
\[
\|\nabla L_0\|_{1,\infty}
:=\max\!\left\{
\max_{a=1,\ldots,d}\sup_{\theta\in K}\left|\frac{\partial L_0(\theta)}{\partial\theta_a}\right|,
\;
\max_{a,b=1,\ldots,d}\sup_{\theta\in K}\left|\frac{\partial^2 L_0(\theta)}{\partial\theta_a\partial\theta_b}\right|
\right\}.
\]

Assumption~\eqref{eq:no:crit:close:border} simply means that the critical sets are not too close to the boundary of $K$. If the critical sets lie in a compact set, it suffices to choose $K$ large enough to satisfy the assumption.

A simple reading of \Cref{thm:general:F} is therefore that when $L$ and $L_1$ are sufficiently close (on $K$), they share the same `geometry', i.e.,  they have the same number of local minimizers and, more generally, the same number of critical points for a given index, with, in addition, corresponding points lying at small distance from one another.

Note that \cite{mei2018landscape} establishes results similar to \Cref{thm:general:F} and \Cref{cor:indexes:morse}, but in a different setting: the comparison of theoretical and empirical risks under i.i.d. random data. In contrast, by relying on Kantorovich’s method of proof, we impose no assumptions on the data. 
An instance of \Cref{thm:general:F} for the special case of linear regression is provided in \Cref{sub:lin-regression}.


\subsection{A machine learning view: the representative and stereotypical predictions}
\label{sub:MLView}

Let us interpret the above within a learning perspective. Under the premises of \Cref{thm:general:F}, we consider a machine learning model with loss $L:\theta \mapsto L(\theta)$  decomposed into a sum $L=L_1+L_0$ where $L_1$ and $L_0$ respectively correspond to some majority and minority phenomena. 

A critical point of $L$ is called a {\em representative prediction}, as it takes into account all available data encoded within $L$, i.e. both those in  $L_1$ and $L_0$\footnote{It would be more natural to reserve that name for local minimizers, as those are generally obtained after training, but we do so for simplicity.}. In the majority-minority model, the critical points of $L_1$ ignore data corresponding to the case when $A=0$, we thus call them {\em stereotypical predictions}. The quantity $\disth(\crit L \cap K,\crit L_1 \cap K)$  is called the {\em stereotype gap}. 

Roughly speaking \Cref{thm:general:F} tells us, in particular, that each representative prediction corresponds to one and only one stereotypical prediction and that these predictions are close whenever the ratio  $$\Delta=\rho_{\max} 
\left( 
\nabla^2L_0(\theta)
\right)/\rho_{\min} (\nabla^2L_1(\theta))$$ is uniformly small. This ratio is the key quantity that governs the stereotype gap.

The result is even more accurate, as \Cref{thm:general:F} shows that the minimizers of $L$ and $L_1$ actually come by pairs as well, so that the stereotypical and representative predictors obtained in practice are `dangerously' close in a majority-minority scenario. As we will see through theoretical and numerical experiments, this renders the training phase delicate and potentially biased. Using the well-known fact that gradient descent converges to critical points in the Morse case (see next section and \Cref{sec:ode}), we may empirically estimate the stereotypical gaps and the associated `debiasing training time' in our imbalanced setting (see also the following sections).
\begin{figure}[h]
\centering
\begin{minipage}{0.48\textwidth}
Protocol (\Cref{tab:resnet18_debiasing}  opposite):  find a stereotypical predictor $\widehat{\theta}_1$ via the gradient flow $-\nabla L_1$ with Kaiming random initialization. Initialize from this predictor $\widehat\theta_1$ and follow the flow of $-\nabla L$, with the guarantee (see \Cref{cor:indexes:morse}) of reaching the corresponding representative predictor $\widehat \theta$. Use these values to estimate the gap $\disth(\crit L,\crit L_1)$ via proxies like $\|\hat{\theta} - \hat{\theta}_1\|$, and to define a debiasing time from $\widehat\theta_1$ to its representative $\widehat\theta$ using gradient descent on $L$ with stopping criterion $\|\theta_{k+1} - \widehat\theta_1\| \geq 0.99 \|\theta_k - \widehat\theta_1\|$.

\end{minipage}\hfill
\begin{minipage}{0.48\textwidth}
\captionof{table}{Stereotypical and representative predictions for imbalanced CIFAR-2 ($n_0/n \approx 3\%$, see \Cref {sec:datasets_detail}) with ResNet 18. We report the average and standard deviation over 30 runs.}
\label{tab:resnet18_debiasing}
\centering
\begin{tabular}{lcc}
\toprule
\textbf{Metric} & \textbf{Mean} & $\pm$ \textbf{Std} \\
\midrule
$\text{Debiasing time}$ & 469 \text{epochs} & $\pm$ 9.4 \\
$\|\hat{\theta} - \hat{\theta}_1\|$ & 0.6723 & $\pm$ 0.0083 \\
$\|\hat{\theta} - \hat{\theta}_1\|_\infty$ & 0.0353 & $\pm$ 0.0047 \\
$\frac{\|\hat{\theta} - \hat{\theta}_1\|}{\|\hat{\theta}\|}$ & 0.00602 & $\pm$ 0.00007 \\
\bottomrule
\end{tabular}
\end{minipage}
\end{figure}

\section{Learning unbalanced data with the gradient method}
\label{sec:general:gradient}

\subsection{Gradient descent training} 

In this section, we study how gradient descent procedures may bias  predictions in the sense that a `careless training' may yield a stereotypical predictor rather than a representative one. Gradient descent training on a $C^2$ loss $L$ is modeled through the ODE (see \Cref{sec:ode} for the representation of ODE curves):
\begin{equation} \label{eq:ODE}
\frac{d}{dt}\theta(t)=-\nabla L(\theta(t)) \mbox{ with } \theta(0)=\theta_{\rm init} \in \R^d. 
\end{equation}
The ODE solution is called a training trajectory.
In \Cref{sec:stereotypical:training:curve}, for the special case of linear regression, we also consider the counterpart $\theta_1(t)$ of $\theta(t)$ with $L$ replaced by $L_1$. We show that  $\theta_1(t)$ and $\theta(t)$ are close under strong imbalance.

\subsection{The majority-training and the majority-adverse zones} 
\label{sec:majority-train}

For $C^2$ smooth losses $L=L_1+L_0$, the {\em majority-training zone}  is defined by  
$$\zmaj=\{\theta \in \R^d: \langle \nabla L(\theta),\nabla L_1(\theta)\rangle >0 \}.$$
In this region,  descending along the gradient of $L$ also decreases $L_1$, and vice versa. In other words, $\zmaj$ is a zone where training $L$ with gradient descent implies training the majority $L_1$. 
The {\em majority-adverse} zone is defined as 
\begin{equation}
\label{eq:Zmajadv}
    \zdisc =\{ \theta \in \mathbb{R}^d:  \left\langle \nabla L(\theta), \nabla L_1(\theta)\right\rangle  \leq  0 \}  \mbox{ so that $\R^d\setminus\zmaj = \zdisc$.}
    \end{equation}
We can similarly consider the minority-training and the minority-adverse zones.
One easily sees that, under the Morse assumption, critical points of $L$ or $L_1$ lie in between $\zmaj$ and $\zdisc$, (see \Cref{prop:boundary} in \Cref{app:extra_results} for details). 
In other words, the stereotypical and representative predictors lie on the boundary of $\zmaj$.

We now establish two major facts: first, the majority zone is typically large, meaning that training the entire model often results in learning only the majority traits (see also the illustration of \Cref{fig:DaZone}); second, the majority-adverse zone promotes the training of the minority loss.

\noindent
\begin{minipage}[t]{0.5\textwidth}
\begin{theorem}[Majority adverse zone]\label{thm:majadv}
Let $K_{-\frac{2\tau}{\delta}} = \{
\theta \in K;
\dist( \theta, \bd K )
\ge \frac{2\tau}{\delta}
\}$.
Under Theorem \ref{thm:general:F} assumptions:
\begin{align*}
\zdisc
\cap 
K_{-\frac{2\tau}{\delta}}
& \subset  & \bigcup_{\widehat{\theta}_1 \in \crit L_1 \cap K} B\left(\widehat{\theta}_1, \frac{2\tau}{\delta}\right).\\
           & \subset &  \bigcup_{\widehat{\theta}_1 \in \crit L_1 \cap K} B\left(\widehat{\theta}_1, \frac{1}{4}\right).
\end{align*}
\end{theorem}

\vspace{1ex}
\begin{remark}[On the majority-training zone size] \rm Under the assumptions of \Cref{thm:general:F}, in the high dimensional regime the majority adverse zone has a volume lower than $O(4^{-d})$ ---much lower in general as we have chosen a conservative bound. Note also that the stronger the imbalance, the more negligible it becomes, see the comments after \Cref{thm:general:F}.\\
\end{remark}

\vspace{1ex}
\begin{lemma}[The majority adverse zone favors minority]   
\label{pro:surMzero:entraine:Lzero:si:L}
For $\theta \in \zdisc$, we have
$$
\langle 
\nabla L(\theta) , 
\nabla L_0(\theta) 
\rangle 
\ge 0.
$$
Thus a training trajectory $\theta:I\to\R^d$ evolving within $\zdisc$ is such that $L_0(\theta(t))$ is non-increasing over the interval $I$. 
\end{lemma}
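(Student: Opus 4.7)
The proof will hinge on the simple algebraic identity $\nabla L_0 = \nabla L - \nabla L_1$, which comes straight from the decomposition $L = L_1 + L_0$. My plan is to expand the inner product $\langle \nabla L(\theta), \nabla L_0(\theta)\rangle$ using this identity and then invoke the defining inequality of $\zdisc$.

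Concretely, for any $\theta \in \R^d$ I will write
\[
\langle \nabla L(\theta), \nabla L_0(\theta)\rangle
= \langle \nabla L(\theta), \nabla L(\theta) - \nabla L_1(\theta)\rangle
= \|\nabla L(\theta)\|^2 - \langle \nabla L(\theta), \nabla L_1(\theta)\rangle.
\]
When $\theta \in \zdisc$, the second term is non-positive by the definition in \eqref{eq:Zmajadv}, so subtracting it gives a sum of two non-negative quantities. This already yields the stronger bound $\langle \nabla L(\theta), \nabla L_0(\theta)\rangle \ge \|\nabla L(\theta)\|^2 \ge 0$, and in particular the claimed inequality.

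For the trajectory statement, I will differentiate $t \mapsto L_0(\theta(t))$ along a solution of the gradient flow. Using the chain rule and the ODE $\dot\theta = -\nabla L(\theta)$,
\[
\frac{d}{dt} L_0(\theta(t))
= \langle \nabla L_0(\theta(t)), \dot\theta(t)\rangle
= -\langle \nabla L(\theta(t)), \nabla L_0(\theta(t))\rangle.
\]
By the first part of the proof, the right-hand side is non-positive whenever $\theta(t) \in \zdisc$. Hence if the whole trajectory stays in $\zdisc$ on the interval $I$, then $L_0 \circ \theta$ has a non-positive derivative throughout $I$ and is therefore non-increasing.

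There is essentially no obstacle here: the argument is a one-line manipulation plus a chain-rule computation, and no regularity beyond what is already assumed for the gradient flow is needed. The only thing worth double-checking is that the ``$\leq 0$'' (rather than strict) inequality defining $\zdisc$ is compatible with the ``$\geq 0$'' conclusion, which it is, since the signs flip cleanly through the identity above.
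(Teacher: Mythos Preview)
Your proof is correct and follows essentially the same approach as the paper: expand $\langle \nabla L, \nabla L_0\rangle = \|\nabla L\|^2 - \langle \nabla L, \nabla L_1\rangle$ using $L_0 = L - L_1$, invoke the defining inequality of $\zdisc$, and then perform the standard Lyapunov (chain-rule) computation for the trajectory statement. The paper's own proof is identical in substance, just slightly more terse.
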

In other words, when the trajectory evolves within the majority-adverse zone, the dynamics learns minority features.
\end{minipage}
\hfill
\begin{minipage}[t]{0.45\textwidth}
\vspace{0pt}
\centering
\includegraphics[width=\linewidth]{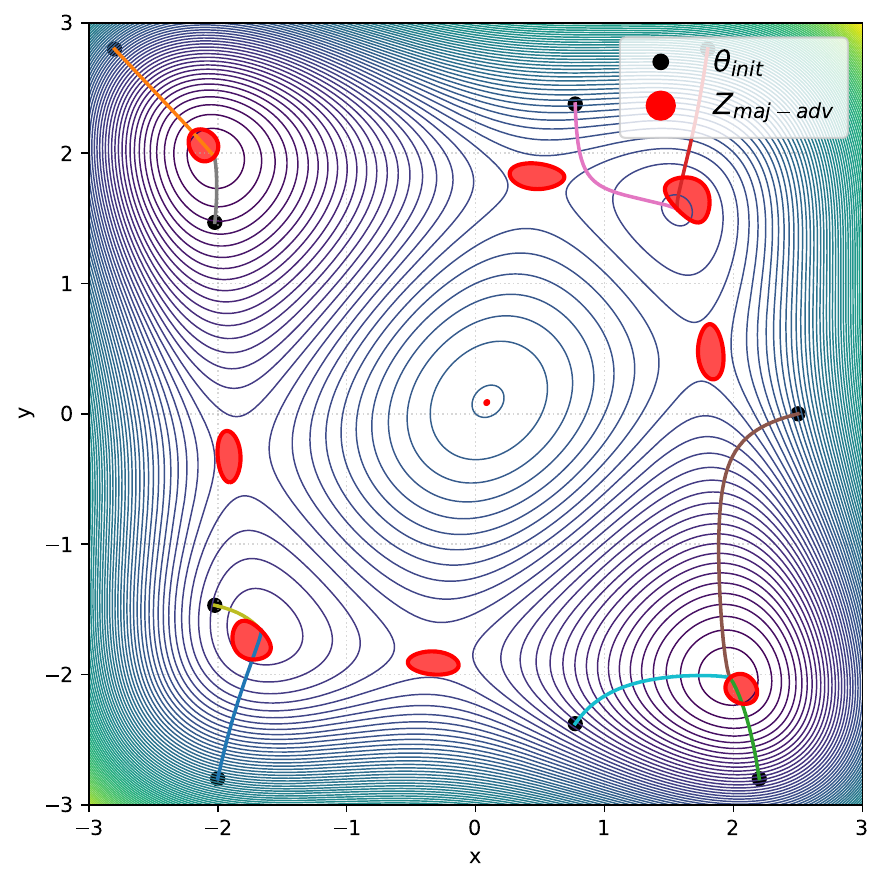}
\captionof{figure}{The majority region (white) covers nearly the entire space, while the majority-adverse region (red) is small. With random initialization, training typically begins in white; hence majority features are learned during the initial phase, which accounts for most of the path length (though not the time). The gradient trajectory then enters a red zone, where minority features are improved. Despite the short arc length in red, the time spent there may be very long. These transient passages through red before convergence correspond to `unlucky' curves.
}
\label{fig:DaZone}
\end{minipage}



\subsection{Lower bounds for debiasing  duration and  catch-up overcost}
\label{sec:train-phase}

Although we cannot, at this stage, provide worst–case `biasing' complexity bounds, we can obtain a \emph{lower bound} by placing ourselves in a setting with a high risk of bias toward the majority. Consider an `unlucky gradient curve' $t \mapsto \theta(t)$ solving \eqref{eq:ODE} that effectively ignores the minority until it meets a majority predictor. On $[0, \tstereo ]$, the trajectory carries the initial condition $\theta_{\mathrm{init}}$ to a critical point of $L_1$, viewed as a stereotype and denoted
\[
\widehat{\theta}_{\mathrm{stereotype}} := \theta(\tstereo).
\]
Up to time $\tstereo $, it is `as if' only $L_1$ were trained ---the minority is entirely ignored. Thereafter, $\theta(t)$ moves toward a critical point of the full loss $L$, denoted $\widehat{\theta}$, which we interpret as a representative predictor.
By the Cauchy-Lipschitz existence theorem,  this trajectory typically exists.  
This curve and neighboring ones may be quite detrimental to fair predictions as shown in \Cref{fig:unlucky_trajectory}.
\begin{figure}[h]
    \centering
    \includegraphics[width=\linewidth]{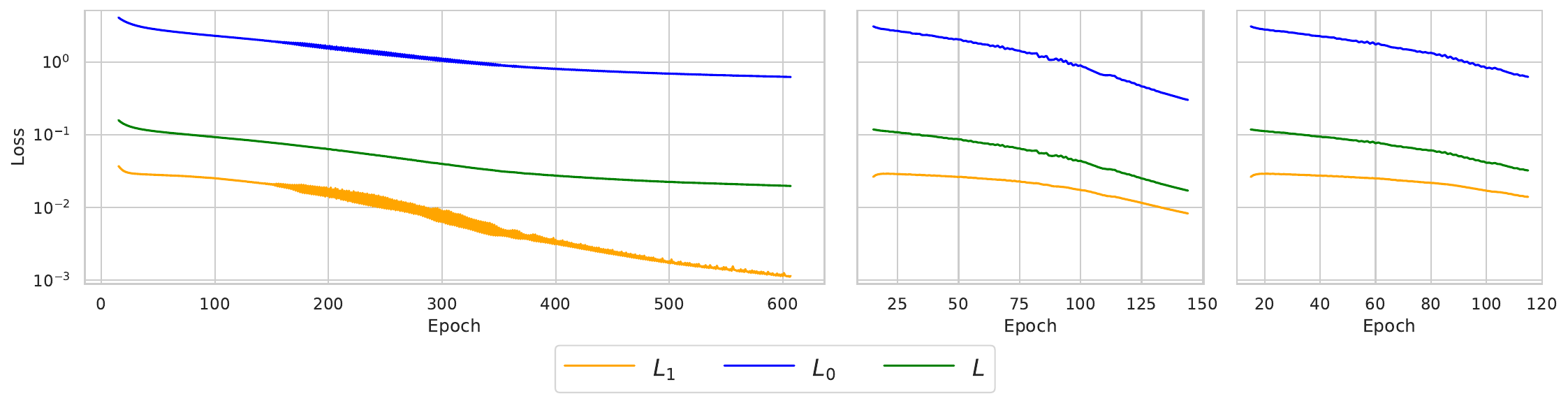}
    \caption{Training curves on `Imbalanced CIFAR 2' with ResNet18 (see \Cref{sec:train-metrics}). Left to right:  unlucky curve with stopping rule based on minority recognition, i.e., $\text{Acc}_0 > 99\%$; random trajectory with the same rule;  random   trajectory with  global accuracy stopping rule $\text{Acc} > 99\%$. Unlucky initialization has 600 epochs while `careless training' (third one) needs 100 epochs and has much higher final $L_0$ value. Middle: random  initialization with minority aware stopping rule training has  140 epochs.  In the real world $\text{Acc}_0 > 99\%$ is not a realistic criterion as we do not know the minority class. Conclusion: risk-averse training should rely on considerably longer training (here +500\%), confirming the results of \Cref{sec:train-phase}. For more confident training, substantially longer runs are still required (here +40\%).}
    \label{fig:unlucky_trajectory}
\end{figure}

The next proposition shows that $\tstereo $ is typically large as $\| \widehat{\theta}_{\,\rm stereotype} - \widehat{\theta} \|$ is typically very small (see \Cref{thm:general:F} and \Cref{tab:fairness_overcost_kappa}).

\begin{proposition}[Training duration]
\label{prop:training:duration}
Assume that $L$ is twice continuously differentiable
on $\R^d$. Consider a ball $\mathcal{B}$ containing $\widehat{\theta}$ and
$\{ \theta(t) ; t \ge 0 \}$.
Assume that for some $M<\infty$ and for all $\theta \in \mathcal{B}$,
\begin{equation}
\label{eq:bounded:Hessian:main}
     \rho_{\max} \left( \nabla^2 L(\theta)
     \right)
\le 
M.
\end{equation}
Assume  $\widehat{\theta} \neq \widehat{\theta}_{\,\rm stereotype}$,
then
$\displaystyle
\tstereo  \ge
\frac{1}{M}
\log 
\left(
\frac{\| \theta_{\mathrm{init}} - \widehat{\theta} \|
}{\| \widehat{\theta}_{\,\rm stereotype} - \widehat{\theta} \|}
\right).
$
\end{proposition}

Let us give a simple yet illustrative example showing that the bound is tight and that training duration becomes rather long in the small step-size regime typical of large-scale deep learning problems.
\begin{example} \label{rk:toy} {\rm (a) (The bound is  tight). Consider  the elementary but instructive model $L_1(x)=x^2/2$, $L_0=[\delta(x-c)^2]/2$ with $c,\delta>0$, $\delta$ being a small imbalance factor that reflects the minority scenario. Simple computations  give the representative predictor
$\widehat{x}
=
\frac{\delta}{1+\delta} c$
while the stereotypical predictor is $\widehat{x}_{\,\rm stereotype} = 0$. The time to reach the stereotype $0$ from $x_{\mathrm{init}} < 0$ is
\[
\tstereo 
=
\frac{1}{1+\delta}
\log 
\left( 
\frac{
\widehat{x} - x_{\mathrm{init}}
}{
\widehat{x}
-
\widehat{x}_{\,\rm stereotype}
}  \right) \mbox{ whence \Cref{prop:training:duration} is tight.}
\]

\noindent
(b) (Small steps yield long training duration). Consider now \(x_{k+1}=x_k-\eta\nabla L(x_k)\) with  a short step  \(\eta=10^{-2}\), as it could be done in deep learning. Let \(x_{\rm init}=-2c\)~; it is a multiple of $c$ for convenience, while  remoteness from $0$ reflects  the ignorance of a blind user on the exact location of the minimizer. Since \(|x_{\rm init}-\widehat{x}|= 2c\) and \(|\widehat{x}_{\rm stereotype}-\widehat{x}|\approx \delta c\):
\[
\tstereo\ =\ \frac{1}{1+\delta}\,
\log\!\Big(\frac{|x_{\rm init}-\widehat{x}|}{|\widehat{x}_{\rm stereotype}-\widehat{x}|}\Big)
\ \approx\ \frac{1}{1+\delta}\log\!\Big(\frac{2}{\delta}\Big).
\]
The discrete time when the stereotype is reached may be approximated by $k_{\rm stereotype}\ \approx\ t_{\rm stereotype}/\eta$. We may provide a table for $\eta=10^{-2},\ x_{\rm init}=-2c$.
\[
\begin{array}{c|c|c}
\delta & \tstereo\ \text{(bound)} & k_{\rm stereotype}\ \text{(bound)}\\\hline
10^{-2} & \frac{1}{1.01}\log(200)\ \approx\ 5.25 & \approx\ 525\\
10^{-3} & \frac{1}{1.001}\log(2000)\ \approx\ 7.59 & \approx\ 759\\
10^{-4} & \frac{1}{1.0001}\log(20000)\ \approx\ 9.89 & \approx\ 989
\end{array}
\]

\noindent
Thus strong imbalance together with traditionally cautious DL steps give  long training durations.

}
\end{example}

Next, we provide a lower bound on the extra-time $\tcatche - \tstereo$ needed to achieve the relative $\epsilon$ precision,
where $\epsilon\in(0,1)$, $\tcatche>\tstereo$ and 
\begin{equation}
\label{eq:epsilon:relative:theta}
\frac{\| \theta(\tcatche) 
-\widehat{\theta}\| }{
\|
\widehat{\theta}_{\,\rm stereotype}
- \widehat{\theta}
\|
}
\le \epsilon.
\end{equation}
This extra time is interpreted as a catch-up time for the algorithm to detect the minority with an acceptable precision. Indeed from time $\tstereo $ to $\tcatche$, the trajectory $\theta(t)$ leaves `a stereotype' and becomes closer to `a representative predictor'. {\em It is a debiasing phase in which the algorithm progressively removes the bias it has itself created during the preliminary training phase.}

\begin{proposition}[Debiasing duration\footnote{See also \Cref{prop:long:time:stereo:to:rep:Lzero} in \Cref{subsection:catchup:Lzero} for complementary results on relative values of $L_0$.}]
\label{prop:long:time:stereo:to:rep}
Assume that $L$ is twice continuously differentiable and satisfies \eqref{eq:bounded:Hessian:main},
for the same $M$ and $\mathcal{B}$.
Assume  that $\widehat{\theta}\neq\widehat{\theta}_{\,\rm stereotype}$.
For $0 < \epsilon <1$, consider $\tcatche $ such that
\eqref{eq:epsilon:relative:theta} holds.
Then 
$\displaystyle
\tcatche  - \tstereo  \ge
\frac{1}{M}
\log \left( 
\frac{1}{
\epsilon}
\right).$
\end{proposition}

\begin{examplecont}{\rm(On the length of debiasing duration)} 
{\rm Back to the setting of \Cref{rk:toy}.
Again from simple computations,
the  debiasing time needed to go from the stereotype
\(\widehat x_{\mathrm{stereotype}}=0\) to a relative precision \(\varepsilon\in(0,1)\) around the representative
\(\widehat x=\tfrac{\delta}{1+\delta}\,c\) is
\[
\tcatche - \tstereo 
\;=\; \frac{1}{1+\delta}\,\log\!\Big(\frac{1}{\varepsilon}\Big),
\]
so \Cref{prop:long:time:stereo:to:rep} is tight. For gradient descent \(x_{k+1}=x_k-\eta\nabla L(x_k)\), the error decays geometrically with factor \(1-\eta(1+\delta)\).
Thus the number of iterations to reach the same relative precision \(\varepsilon\) satisfies
\[
\kcatche(\eta,\delta,\varepsilon)\;\ge\;
\frac{\log(1/\varepsilon)}{-\log\!\bigl(1-\eta(1+\delta)\bigr)}
\;\approx \;\frac{1}{\eta(1+\delta)}\,\log\!\Big(\frac{1}{\varepsilon}\Big),
\]
which is a version of  \Cref{prop:long:time:stereo:to:rep}. Thus an approximate  debiasing step count with a `standard' ML learning rate \(\eta=10^{-2}\):
\[
\begin{array}{c|ccc}
\delta & \varepsilon=10^{-1} & 10^{-2} & 10^{-3}\\\hline
10^{-2} & 228 & 456 & 684\\
10^{-3} & 230 & 460 & 690\\
\end{array}
\]
Hence, for strong imbalance (\(\delta\ll 1\)) and small steps, debiasing typically costs a few hundred additional iterations even after reaching the stereotype.}
\end{examplecont}

\section{Numerical experiments}
\label{sec:experiments}

We study the effect of subgroup imbalance in supervised deep learning using image (CIFAR-10~\cite{krizhevsky2010cifar}, EuroSAT~\cite{helber2019eurosat}) and tabular (Adult~\cite{adult_2}) datasets. Each dataset is denoted by $\mathcal{D} = \{(X_i, Y_i, A_i)\}_{i=1}^n$, where $(X_i, Y_i)$ is an input-label pair and $A_i \in \{0,1\}$ is a binary attribute ($0$ is minority). While $A$ is not used during training, it enables evaluation of model performance across imbalanced subgroups. 
In each experiment, we report the global loss $L = L_0 + L_1$, and {\em average loss per sample} in each group, i.e., $(nL_0)/n_0$ and $(nL_1)/n_1$. For details on the implementation setup, see \Cref{sec:experiments_detail}.

\paragraph{Metrics for class-balanced predictions.} We evaluate class-balance  using training (and occasionnally test) accuracy, denoted by $\text{Acc}, \text{Acc}_0$ (see \Cref{sec:train-metrics}), as our focus is on optimization under imbalance.  
To measure the time cost of `well balanced training', we track the number of epochs $t$ needed to reach a threshold accuracy level $\kappa  \in [0, 1]$ :
\begin{align*}
T_{\text{early}} := \min_{ t \in \mathbb{N}} \{\text{Acc}(\theta_t) \geq \kappa \}, \quad
T_{\text{final}} := \min_{ t \in \mathbb{N}} \{\text{Acc}_0(\theta_t) \geq \kappa \}, \quad
T_{\text{debias}} := T_{\text{final}} - T_{\text{early}}.
\end{align*}
We define the \emph{catch-up overcost} as the relative delay to reach good class-balance prediction, i.e., a satisfying minority accuracy:
$$
\text{Catch-up Overcost} := \frac{T_{\text{debias}}}{T_{\text{early}}}.
$$



\paragraph{Imbalanced CIFAR-10.}
We investigate the effect of class imbalance on CIFAR-10 using models from 100K to 25M parameters (see \Cref{tab:fairness_overcost_kappa}). The original dataset has 10 classes with 5000 samples each. To create imbalance, we subsample one class (denoted $A = 0$) to retain $n_0$ samples, and keep the others ($A = 1$) unchanged with $n_1 = 9 \times 5000$. As in \cite{JohnsonKhoshgoftaar2019}, we define the imbalance ratio as $\zeta = n_0 / 5000$, which gives a group proportion $n_0 / (n_0 + n_1) = \zeta / (\zeta + 9)$, and consider four imbalance levels: $\zeta \in \{1\%, 10\%, 30\%, 80\%\}$. In \Cref{fig:resnet18_cifar10}, we show the results for ResNet-18 (see also \Cref{sec:additional_experiments} for more). For $\zeta = 1\%$, $\text{Acc}_0$ remains close to zero for about 60 epochs, following a stereotypical training curve (see Appendix). 

\begin{figure}[ht]
\centering
\includegraphics[width=\linewidth]{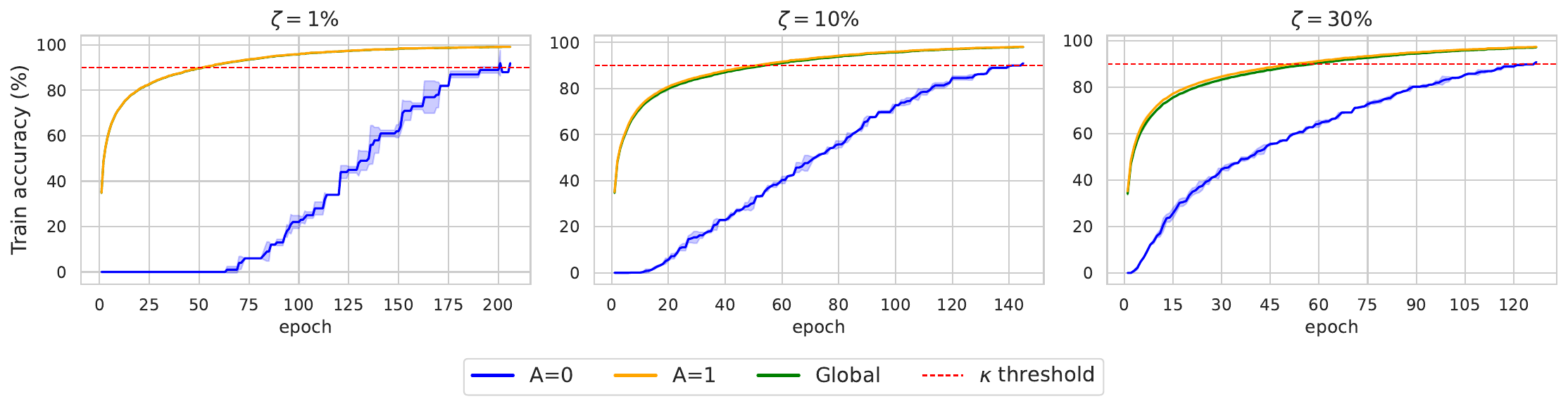}
\caption{Training accuracy for different subgroup imbalance scenarios (1\%, 10\%, and 30\%) using ResNet18 on CIFAR-10 and threshold $\kappa = 90\%$. Greater imbalance delays the learning of minority features: their accuracy reaches $\kappa$ later.}
\label{fig:resnet18_cifar10}
\end{figure}

\begin{table}[h!]
\centering
\caption{Catch-up overcost (in $\%$) for each model across imbalance levels $\zeta \in \{1\%, 10\%, 30\%, 80\%\}$. We report means over 3 runs with thresholds $\kappa \in \{90\%, 99\%\}$, and model parameter counts.}
\label{tab:fairness_overcost_kappa}
\setlength{\tabcolsep}{4pt}
\renewcommand{\arraystretch}{1.2}
{\fontsize{9}{11}\selectfont
\begin{tabular}{l c cccc cccc}
\toprule
\multirow{2}{*}{\textbf{Models}} & \multirow{2}{*}{\textbf{Number of parameters}} 
& \multicolumn{4}{c}{$\kappa = 90\%$} 
& \multicolumn{4}{c}{$\kappa = 99\%$} \\
\cmidrule(lr){3-6} \cmidrule(lr){7-10}
& & 1\% & 10\% & 30\% & 80\% & 1\% & 10\% & 30\% & 80\% \\
\midrule
MobileNetV2 \cite{Sandler2018MobileNetV2}  & 543K  & 450 & 275 & 166 & 0   & 62 & 52 & 42 & 0 \\
SqueezeNet \cite{Iandola2016SqueezeNet}    & 727K  & 270 & 203 & 150 & 0   & 55 & 53 & 32 & 0 \\
VGG11 \cite{Simonyan2015VGG}               & 9M    & 291 & 171 & 114 & 0   & 53 & 44 & 25 & 0 \\
ResNet18 \cite{He2016ResNet}               & 11M   & 292 & 164 & 113 & 0   & 61 & 49 & 31 & 0 \\
VGG19                                       & 20M   & 280 & 152 & 112 & 0   & 70 & 65 & 50 & 0 \\
ResNet50                                    & 25M   & 157 &  86 &  68 & 0   & 50 & 37 & 37 & 0 \\
ResNet101                                   & 42M   & 145 &  90 &  64 & 0   & 34 & 36 & 26 & 0 \\
\bottomrule
\end{tabular}
}
\end{table}

\paragraph{EuroSAT.}
We use a ResNet18 model and evaluate its behavior on a binary classification task derived from the EuroSAT dataset. Images are labeled according to a binary attribute $A$, where $A=0$ corresponds to bluish images (\,$n_0 / (n_0 + n_1) \approx 0.03$) and $A=1$ to all others. We do not modify the class proportions and use the imbalance present in the original dataset. Figure~\ref{fig:results_eurosat} displays losses and accuracies  for both subgroups evidencing a catch-up overcost of $45\%$ for a threshold $\kappa = 90\%$.

\begin{figure}[ht]
\centering
\includegraphics[width=0.99\linewidth]{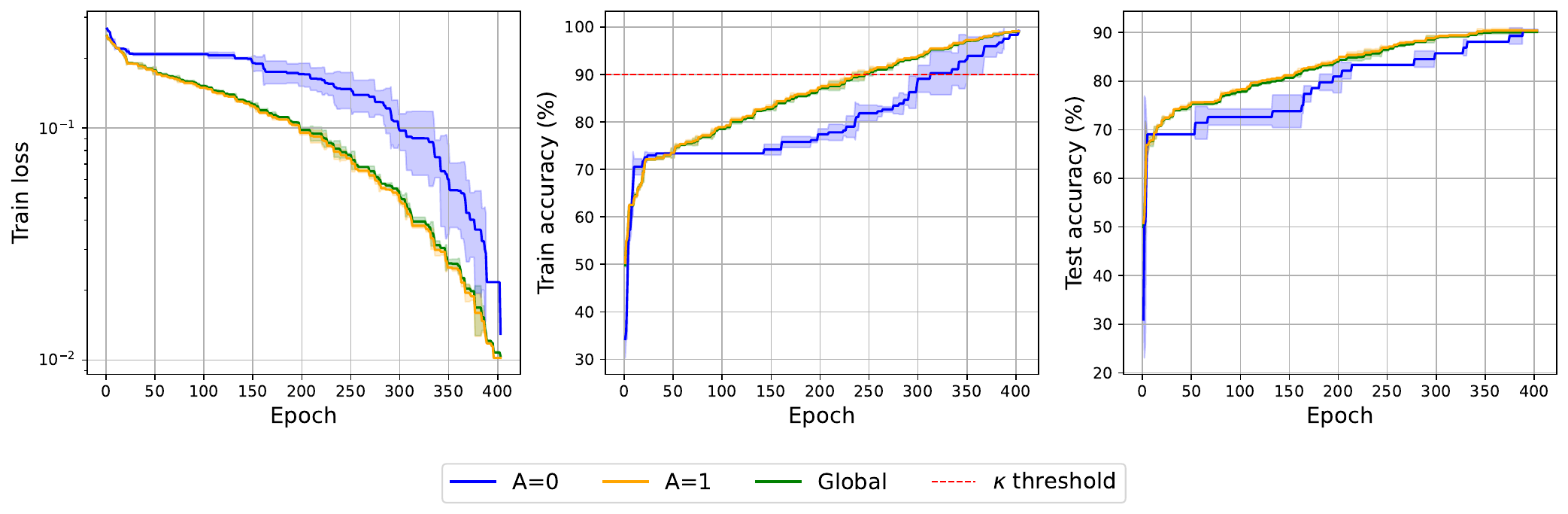}
\caption{Training and test loss/accuracy for ResNet-18 on EuroSAT (mean of 3 runs). Minority classes exhibit delayed learning -- their accuracy improves substantially only in later epochs.}
\label{fig:results_eurosat}
\end{figure}

\paragraph{Adult income census.}
We train a TabNet classifier \cite{arik2021tabnet} on a binary task from the Adult dataset \cite{adult_2}. The minority group ($A = 0$) includes high-income women, representing only $3\%$ of the training set. The majority group ($A = 1$) includes all others. We preserve the original class distribution and train with cross-entropy loss, tracking subgroup metrics. Results are shown in Figure~\ref{fig:results_adult} and we have a catch-up overcost of $416\%$ for a threshold $\kappa = 90\%$.

\begin{figure}[ht]
\centering
\includegraphics[width=0.95\linewidth]{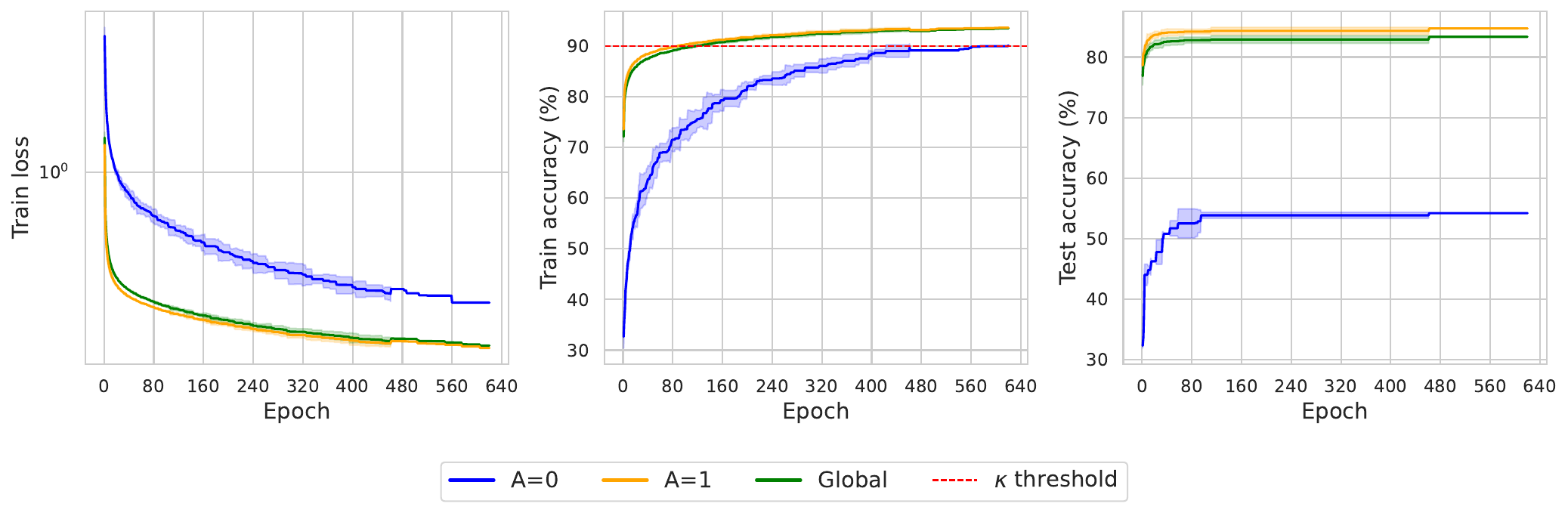}
\caption{Loss and accuracy with TabNet on Adult (mean of 3 runs).
Under strong imbalance, the catch-up overcost is substantial --- around 400\%. }
\label{fig:results_adult}
\end{figure}

\paragraph{Results and discussion.}
Fairness under imbalance requires much longer training: the minority group ($A = 0$) consistently reaches $\kappa$ much later than the global 
accuracy. The catch-up overcost is particularly high under strong imbalance, exceeding $400\%$ on Adult and CIFAR-10. Empirically, larger models reduce this overcost but do not eliminate the necessity of longer  well-tailored training. These results support our theoretical findings on debiasing duration in imbalanced settings (see \Cref{sec:train-phase}), the overwhelming dominance of the majority-training zone, and the difficulty of distinguishing a representative predictor from a stereotypical one. {We also ran preliminary experiments with AdamW  and XGBoost (gradient-boosted decision trees). In both cases, we observe the same qualitative phenomenon. Irrespective of faster or slower absolute training, attaining minority awareness requires a comparable relative increase in training: extra epochs/steps for AdamW and extra boosting rounds for XGBoost; see Appendix~\ref{sec:adamw_experiments} and Appendix~\ref{sec:additional_adult}.}


\section{Conclusion}
Although our goals are primarily theoretical and future research should explore more refined training protocols, we can draw several conclusions supported by both theory and numerics ---ours and the community’s as well, see e.g.,  \cite{anand93,adam_imbalance_langage}. These conclusions may also serve as recommendations for practitioners. Two key quantities emerge as critical in our study: the stereotype gap and the training duration. Additionally, we have empirical evidence that the model size may be a determining factor in achieving budget frugality.\\
--- In a majority-minority scenario, population and variability imbalance are determining factors influencing the stereotype gap (\Cref{thm:general:F} and the subsequent subsections). This gap, between stereotypes and representative predictors, can be very small in severely imbalanced cases.\\
--- For convex or deep learning problems, gradient training generally leads  to a `satisfying predictor' in the sense of a low-value loss $L$, see e.g., \cite{beck2017first} or \cite{du2018gradient}. However, in our majority-minority scenario,  the action of $L_0$ is generally almost indetectable, as shown in \Cref{fig:DaZone} and \Cref{sec:train-phase}, thus  early stopping and under-dimensioned models are prone to produce stereotypes.\\
--- To obtain a representative predictor, it is advisable to use larger networks and extend the training duration, as supported by \Cref{prop:long:time:stereo:to:rep,prop:training:duration}, and the numerical section. The corresponding catch-up overcost ratio can take considerable values, e.g., from $25\%$ to $450\%$ for the imbalanced CIFAR-10. However, this must be mitigated in view of possible spurious correlations that arise in overparameterized regimes \cite{sagawa2020investigation}.

\begin{ack}
The authors are grateful for the feedback by the anonymous reviewers, that led to considerable improvement of the paper. 
This work was supported by the ANR project Regul IA
and by the Chairs TRIAL and UQPhysAI of the Toulouse ANITI AI Cluster. JB acknowledges support from the Air Force Office of
Scientific Research, Air Force Material Command, USAF, under grant number FA8655-22-1-7012 and TSE-P. Access to MesoNET resources in Toulouse was granted under allocation m23038.
\end{ack}

\newpage
\appendix
\noindent This is the appendix for `When majority rules, minority loses: bias amplification of gradient descent'.

\etocdepthtag.toc{mtappendix}
\etocsettagdepth{mtsection}{none}
\etocsettagdepth{mtappendix}{section}
\tableofcontents

\section{Notations and auxiliary results}
\label{sec:notations}

\subsection{Notations.}\label{not}
For a matrix $A$, we write $\rho_{\min}(A)$ and $\rho_{\max}(A)$ for its smallest and largest singular value.
If the matrix $A$ is square symmetric, we write $\lambda_{\min}(A)$ and $\lambda_{\max}(A)$ for its smallest and largest eigenvalue.
Given $x \in \mathbb{R}^d$, we write $\| x \|$ for its Euclidean norm and for $\epsilon>0$, we write $B(x,\epsilon) = \{ y \in \mathbb{R}^d ; \|x - y\| \leq \epsilon\}$.

For a function $f: \mathbb{R}^d \to \mathbb{R}^d$ and for $x \in \mathbb{R}^d$, we write $\jac f(x)$ for the Jacobian matrix of $f$ at $x$.
For a function $f: \mathbb{R}^d \to \mathbb{R}$ and for $x \in \mathbb{R}^d$, we write
$\nabla f(x)$ for the gradient vector of $f$ at $x$
and
$\nabla^2 f(x)$ for the Hessian matrix of $f$ at $x$. For a function $G : \R^d \to \R$, we write
\begin{align*}
    & \crit G=\{\theta\in \R^d:\nabla G(\theta)=0\}\\
    & \argminloc G=\{\theta\in \R^d:\theta \mbox{ is a local minimizer of $G$ over $\R^d$} \}.
\end{align*}

For a non-empty subset $A$ of $\mathbb{R}^d$ and for $x \in \R^d$, we let $\dist(x,A) = \inf_{y \in A}
\| x-y \|$.
For two non-empty subsets $A$ and $B$ of $\mathbb{R}^d$, the Hausdorff distance between $A$ and $B$ is denoted by
\[
\disth (A,B)
= 
\max \left(
\sup_{x \in A}
\inf_{y \in B}
\| x-y \|,
\sup_{y \in B}
\inf_{x \in A}
\| x-y \|
\right).
\]
When $A$ and $B$ are non-empty and bounded this quantity is finite. 

The topological boundary of $A$ is written $\bd A$. 

\subsection{Training metrics}
\label{sec:train-metrics}

Let $f_\theta : \mathbb{R}^d \to \mathbb{R}^C$ be a neural network, parameterized by $\theta$, that maps an input $x \in \mathbb{R}^d$ to a vector of $C$ class scores. We denote $[C] = \{1, \dots, C\}$ the set of class indices, and define the predicted label as $\hat{y}(x) = \arg\max_{c \in [C]} f_\theta (x)_c$. We compute accuracy separately for each group $j \in \{0,1\}$ as the proportion of correct predictions in $\mathcal{D}_{A=j}$, and define the \textit{global accuracy} as the weighted average across groups. For each $j \in \{0,1\}$:
\begin{equation*}
\text{Acc}_{j}(\theta) = \frac{1}{n_j} \sum_{(x_i, y_i) \in \mathcal{D}_{A=j}} {\bf 1} [\hat{y}(x_i) = y_i], \text{ and } \text{Acc}(\theta) = \frac{n_0}{n} \text{Acc}_{0}(\theta) + \frac{n_1}{n} \text{Acc}_{1}(\theta), 
\end{equation*}
where \({\bf 1}[\hat{y}(x_i) = y_i]\) denotes the indicator function, equal to $1$ if the predicted label matches the true label and $0$ otherwise.

\subsection{Lemma}

The next lemma is well-known but stated here for convenience.

\begin{lemma} \label{lemma:finite:diference:multiD}
    Let $E$ be an open set of $\mathbb{R}^k$ for some $k \in \mathbb{N}$. Let $f: E \to \mathbb{R}^k$ have Jacobian $\jac f$. Let $x,y \in E$ so that the segment between $x$ and $y$ is in $E$. Then
    \[
     \| f(y) - f(x) \|
\le 
    \left( 
\sup_{u \in E} 
\rho_{\max}(\jac f(u))
    \right)  
    \| y - x\|.
    \]
\end{lemma}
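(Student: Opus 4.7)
The plan is to reduce the vector-valued estimate to a one-dimensional integration along the segment joining $x$ to $y$. If the supremum on the right-hand side is infinite, the inequality is trivial, so I may assume it is finite; call it $C$.

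First, I parameterize the segment by $\gamma: [0,1] \to E$, $\gamma(t) = x + t(y-x)$, which lies in $E$ by assumption. I then define the auxiliary curve $g = f \circ \gamma : [0,1] \to \mathbb{R}^k$. Since $f$ is differentiable on $E$ and $\gamma$ is affine, the chain rule yields $g'(t) = \jac f(\gamma(t))\,(y-x)$ for every $t \in [0,1]$.

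Next, applying the fundamental theorem of calculus componentwise,
\[
f(y) - f(x) = g(1) - g(0) = \int_0^1 \jac f(\gamma(t))\,(y-x)\, dt.
\]
Taking the Euclidean norm and pulling it inside the integral (standard for Bochner/componentwise Riemann integrals), I obtain
\[
\| f(y) - f(x) \| \le \int_0^1 \| \jac f(\gamma(t))\,(y-x) \|\, dt.
\]
The key identity is that for any matrix $A$, the induced operator norm on $(\mathbb{R}^k,\|\cdot\|)$ equals its largest singular value, so $\|A v\| \le \rho_{\max}(A)\,\|v\|$. Applying this pointwise and bounding $\rho_{\max}(\jac f(\gamma(t)))$ by $C = \sup_{u \in E}\rho_{\max}(\jac f(u))$ gives the claimed inequality after integrating $1\, dt$ over $[0,1]$.

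There is no real obstacle: the only subtle point is that the lemma does not assume $f$ is continuously differentiable, so one may worry about integrability of $t \mapsto \jac f(\gamma(t))$. If $f$ is merely differentiable, the standard device is to invoke a classical mean value inequality for vector-valued maps (e.g., apply the scalar mean value theorem to $t \mapsto \langle f(\gamma(t)), v\rangle$ for a well-chosen unit vector $v$ achieving $\|f(y)-f(x)\| = \langle f(y)-f(x), v\rangle$), which produces $t^\star \in (0,1)$ with $\|f(y)-f(x)\| \le |\langle \jac f(\gamma(t^\star))(y-x), v\rangle| \le C\,\|y-x\|$. In the usage made of this lemma within the paper $f$ is $C^1$, so either route suffices.
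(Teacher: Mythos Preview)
Your proof is correct and standard. Note that the paper itself does not supply a proof of this lemma: it is stated ``for convenience'' as a well-known result, so there is no approach to compare against, and your argument (integration along the segment together with the operator-norm bound, plus the optional mean-value-inequality variant for the merely differentiable case) is exactly the kind of routine justification the paper omits.
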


\subsection{Discretization of ODE curves}\label{sec:ode}

In various parts of this paper, we refer to or represent ODE curves in our experiments. Unless otherwise specified, this refers to a discretization of the ODE using small step sizes. For instance, given the dynamics
\[
\dot{\theta}(t) = F(\theta(t)), \quad \theta(0) = \theta_{\rm init},
\]
with $F:\R^p\to\R^p$ a locally Lipschitz field, the discretization we use is of the form
\[
\theta_{k+1} = \theta_k - s_k F(\theta_k),
\]
where the step size \( s_k \ll 1 \); in practice, we typically use \( s_k = O(10^{-3}) \).

Note however that for the numerical section, we proceed differently as our objective is rather training through the gradient method. We thus use larger steps and mini-batches.

\section{{A case study: linear regression}}
\label{sec:new:linear}

  To illustrate further our results in \Cref{sec:general:distance:critical,sec:general:gradient}, consider a multidimensional regression model with loss
  \begin{eqnarray}
      L(\theta)  =  
      \frac{1}{2n} 
      \left\| 
X \theta - Y
      \right\|^2 = \underbrace{\frac{1}{2n}\|X^1\theta -Y^1\|^2}_{L_1(\theta)} +\underbrace{\frac{1}{2n}\|X^0\theta -Y^0\|^2}_{L_0(\theta)}\label{eq:loetl1},
  \end{eqnarray}
  where $X$ is $n \times d$ with rows $X_1^\top,\ldots,X_n^\top$, and where $X^1$ (respectively $X^0$) contains the rows of $X$ from the majority (respectively minority) class. Similarly, $Y$ is $n$-dimensional with components $Y_1,\ldots,Y_n$ and $Y^1$ (respectively $Y^0$) contains the components of $Y$ from the majority (respectively minority) class.    
  Letting $X^{j \top} = (X^j)^\top$ for $j=0,1$,
   we define the corresponding empirical covariance matrices
 $ \displaystyle S=X^\top X/n, \quad  S_0= X^{0\top} X^0/n_0, \quad S_1=X^{1 \top} X^1/n_1 .$
Assume that the covariance matrices are invertible, which may be granted through a ridge regression model in the generic/regularized spirit presented in \Cref{subsec:stat}. 

\subsection{Distance between minimizers}
\label{sub:lin-regression}

 In the setting of linear regression, \Cref{thm:general:F} in \Cref{sec:general:distance:critical}
 becomes the following (simpler) theorem.

 \begin{theorem}[Representative-stereotypical gap: linear regression case] \label{theorem:linear}
Assume that $S_0$ and $S_1$ are invertible, and denote by $ \widehat{\theta}$ , $ \widehat{\theta}_1$, and $\widehat{\theta}_0$, respectively, the unique global minimizers of
\( L \), \( L_1 \), and \( L_0 \), respectively (on $\R^d$). Then
\[
\| \widehat{\theta} - \widehat{\theta}_1 \| \leq 
\frac{2 \rho_{\max}(n_0 S_0)}{\rho_{\min}(n_1 S_1)} \left(1 + \| \widehat{\theta}_1 - \widehat{\theta}_0 \| \right).
\]
 \end{theorem}
 The key quantity behind the stereotypical gap is the ratio
   $$\displaystyle \frac{ \rho_{\max} (n_0 S_0 )}{\rho_{\min}(n_1 S_1)}=\frac{ \rho_{\max} (\nabla^2L_0 )}{\rho_{\min}(\nabla^2L_1)},$$
  where we have omitted the dependence on $\theta$ in the Hessians, which are constant. Two statistical effects drive this ratio:\\
  --- Population size ratio:  when the majority is much  larger than the minority, then $n_0/n_1$ is small, this tends to increase the risk of stereotypical predictions.\\
 --- Min-max variability ratio: if the smallest variability of the majority is much bigger than the largest variability of the minority group then stereotypical predictions are more likely.


\subsection{Stereotypical and representative training curves}
\label{sec:stereotypical:training:curve}



In this section, we compare training \( L \)
as in \eqref{eq:ODE}, which provides a {\em representative training curve}, with training on the majority group, which provides a {\em stereotypical training curve} as if the minority did not exist. The stereotypical  training curve is:
\[
\frac{d}{d t}
\theta_1(t)
=
- 
\nabla L_1(\theta_1(t)). 
\]
 Our estimate  depends once more on the  ratio $ \Delta= \rho_{\max} (n_0 S_0 )/\rho_{\min}(n_1 S_1).$ 

\begin{proposition}[Distance between stereotypical and representative training curves] \label{pro:legrad}
    We have, for any $t >0$,
    \[
        \| \theta(t) - \theta_1(t) 
\| 
\le
    \|  \widehat{\theta} -  \widehat{\theta}_1 \|
+
t \rho_{\max}\left( \frac{n_0}{n} S_0\right) 
e^{-  t \rho_{\min}\left(\frac{n_1}{n} S_1\right) } 
\left(
\| 
\widehat{\theta}_1
\|
+
\| 
\theta_{\mathrm{init}} 
\|
\right),
    \]
    \[\|\theta-\theta_1\|_\infty:=
\sup_{t > 0}
\|  \theta(t) -  \theta_1(t) \|
\le 
\|  \widehat{\theta} -  \widehat{\theta}_1 \|
+
\frac{ 
\| 
\widehat{\theta}_1
\|
+
\| 
\theta_{\mathrm{init}} 
\|
 }{e } \frac{ \rho_{\max} (n_0 S_0 )}{\rho_{\min}(n_1 S_1)}.
\]
\end{proposition}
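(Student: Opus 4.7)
Both $L$ and $L_1$ are quadratic, so the flows have closed-form solutions. Writing $\nabla L(\theta)=S(\theta-\widehat\theta)$ and $\nabla L_1(\theta)=(n_1/n)S_1(\theta-\widehat\theta_1)$ (using that $\frac{1}{n}X^\top Y=S\widehat\theta$ and $\frac{1}{n}X^{1\top}Y^1=\frac{n_1}{n}S_1\widehat\theta_1$), I would solve the two linear ODEs directly to get
\[
\theta(t)=\widehat\theta+e^{-tS}(\theta_{\mathrm{init}}-\widehat\theta),\qquad
\theta_1(t)=\widehat\theta_1+e^{-t(n_1/n)S_1}(\theta_{\mathrm{init}}-\widehat\theta_1).
\]
Subtracting and adding $e^{-tS}\widehat\theta_1$ to group terms by operator, I would rearrange as
\[
\theta(t)-\theta_1(t)=\bigl(I-e^{-tS}\bigr)(\widehat\theta-\widehat\theta_1)
+\bigl(e^{-tS}-e^{-t(n_1/n)S_1}\bigr)(\theta_{\mathrm{init}}-\widehat\theta_1).
\]
The first term is immediately bounded by $\|\widehat\theta-\widehat\theta_1\|$ since $S\succeq 0$ implies $\|I-e^{-tS}\|\le 1$, and the second term gives the dynamical contribution we need to control.

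\textbf{Main step.} The heart of the argument is to bound $\|e^{-tS}-e^{-t(n_1/n)S_1}\|$. Since $S$ and $S_1$ need not commute, I would use the Duhamel-type identity
\[
e^{-A}-e^{-B}=\int_0^1 e^{-sA}(B-A)e^{-(1-s)B}\,ds,
\]
applied with $A=tS$ and $B=t(n_1/n)S_1$, so that $A-B=t(n_0/n)S_0$. Taking operator norms and pulling the constant $\rho_{\max}((n_0/n)S_0)$ out of the integral yields
\[
\bigl\|e^{-tS}-e^{-t(n_1/n)S_1}\bigr\|\le
t\,\rho_{\max}\!\bigl((n_0/n)S_0\bigr)\int_0^1\!\!\bigl\|e^{-stS}\bigr\|\,\bigl\|e^{-(1-s)t(n_1/n)S_1}\bigr\|\,ds.
\]
The key observation that makes the exponential rate in the statement appear is that $S=(n_1/n)S_1+(n_0/n)S_0\succeq (n_1/n)S_1$, hence $\rho_{\min}(S)\ge \rho_{\min}((n_1/n)S_1)=:\lambda_1$. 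Both semigroup norms are therefore bounded by $e^{-st\lambda_1}$ and $e^{-(1-s)t\lambda_1}$, whose product $e^{-t\lambda_1}$ is independent of $s$, collapsing the integral. Combined with $\|\theta_{\mathrm{init}}-\widehat\theta_1\|\le\|\theta_{\mathrm{init}}\|+\|\widehat\theta_1\|$, this gives the first bound of the proposition.

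\textbf{Sup bound.} For the second inequality, I would add the pointwise bound just obtained and use $\|I-e^{-tS}\|\le 1$, then optimize $t\mapsto t\,e^{-t\lambda_1}$ over $t>0$; elementary calculus gives the maximum $1/(e\lambda_1)$ at $t=1/\lambda_1$. Substituting and noting that the $1/n$ factors cancel in the ratio $\rho_{\max}((n_0/n)S_0)/\rho_{\min}((n_1/n)S_1)=\rho_{\max}(n_0 S_0)/\rho_{\min}(n_1 S_1)$ yields the claimed $L^\infty$-in-time estimate.

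\textbf{Expected obstacle.} The only delicate point is the non-commutativity of $S$ and $(n_1/n)S_1$, which forces the use of the Duhamel identity rather than a naive exponential subtraction; choosing the endpoints of the interpolation so that the decaying factor $e^{-t\lambda_1}$ survives after integration (and does not degrade into something like $\tfrac{1-e^{-ct}}{c}$) is what makes the bound sharp enough to match the stated form.
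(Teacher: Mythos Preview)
Your proof is correct and follows essentially the same route as the paper: closed-form solutions of the two linear flows, the same algebraic splitting into an $(I-e^{-tS})(\widehat\theta-\widehat\theta_1)$ term and an $(e^{-tS}-e^{-t(n_1/n)S_1})$ term acting on $\theta_{\mathrm{init}}$ and $\widehat\theta_1$, followed by the bound $\rho_{\max}(e^{-tS}-e^{-t(n_1/n)S_1})\le t\,\rho_{\max}((n_0/n)S_0)\,e^{-t\rho_{\min}((n_1/n)S_1)}$ and the calculus maximization of $t\,e^{-t\lambda_1}$. The only cosmetic difference is that the paper invokes an external lemma (\cite[Lemma~3.24]{parekh2019kpz}) for the exponential-difference estimate, whereas you derive it directly via the Duhamel interpolation identity; your self-contained derivation is arguably cleaner.
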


\subsection{Catch-up overcost as measured with the minority loss $L_0$}
\label{subsection:catchup:Lzero}

Proposition \ref{prop:long:time:stereo:to:rep} measures the catch-up overcost as the time needed to get close to $\widehat{\theta}$ as measured by the distance between parameters. The following proposition shows more qualitatively, for the linear model, that the catch-up overcost goes to infinity, when it is defined as the time needed to get close to $\widehat{\theta}$ as measured by the minority loss $L_0$. 

\begin{proposition}[Catch-up overcost measured with the loss $L_0$]
\label{prop:long:time:stereo:to:rep:Lzero}
Assume that $\widehat{\theta}$, $\widehat{\theta}_0$ and $\widehat{\theta}_1$ are two-by-two distinct. 
Consider a representative training curve $t \mapsto \theta(t)$ as in \eqref{eq:ODE}, such that for some $\tstereo $, $\theta(\tstereo ) = \widehat{\theta}_1$. Assume that for $t \ge \tstereo $, $\theta(t) \in \zdisc$.

For $0 < \epsilon <1$, consider $t'_{\,\rm catchup,\epsilon}$ such that
\begin{equation} \label{eq:epsilon:relative}
\frac{L_0(\theta(t'_{\,\rm catchup,\epsilon})) - L_0(\widehat{\theta})}{
L_0(\widehat{\theta}_1)
- L_0(\widehat{\theta})}
\le \epsilon.
\end{equation}
Then we have
\[
t'_{\,\rm catchup,\epsilon} - \tstereo  \underset{\epsilon \to 0}{\longrightarrow}
\infty.
\]

\end{proposition}

\subsection{The minority-adverse zone can be large}
\label{subsection:min:adv:large}

The minority-adverse zone is defined as
\[
 \zminadv =\{ \theta \in \mathbb{R}^d:  \left\langle \nabla L(\theta), \nabla L_0(\theta)\right\rangle  \leq  0 \}
\]
and is the counterpart to the majority-adverse zone in \eqref{eq:Zmajadv}.
The next proposition exhibits a ball of radius $R$ that is contained in the  minority-adverse zone. This radius $R$ is large whenever $\| \widehat{\theta} -\widehat{\theta}_0\|$ is large and $S$ and $S_0$ are well-conditioned. Hence, roughly speaking, while \Cref{thm:majadv} states that the majority-adverse zone is always small, the next proposition states that the minority-adverse zone can be large. Hence, gradient descents on $L$ may not decrease $L_0$ over long training times, which is a conclusion of our numerical experiments in \Cref{sec:experiments}.

\begin{proposition} \label{pro:grad1}
Assume $\widehat{\theta} \neq \widehat{\theta}_0$.
Let  
    \begin{equation} \label{eq:lower:lower:bound:R}
    R = \frac{
    \rho_{\min}( S_0)
    \rho_{\min}( S)
    }{
    33
    \rho_{\max}( S_0)
    \rho_{\max}( S)
    }
    \| \widehat{\theta} - \widehat{\theta}_0 \|.
        \end{equation}
    Then there exists $\overline{\theta} \in \mathbb{R}^d$ such that  $  B(  \overline{\theta} , R )
    \subset
\zminadv$.
    \end{proposition}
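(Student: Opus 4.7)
The candidate center $\overline{\theta}$ has to be chosen so that the two gradients are nearly anti-parallel at $\overline{\theta}$, and then the negativity of $\langle\nabla L,\nabla L_0\rangle$ has to be propagated to a whole ball by a quadratic-in-$R$ perturbation estimate. Setting up notation, the quadratic form of $L$ gives $\nabla L(\theta)=S(\theta-\widehat{\theta})$ and $\nabla L_0(\theta)=(n_0/n)S_0(\theta-\widehat{\theta}_0)$, so the sign of $\langle \nabla L,\nabla L_0\rangle$ is the sign of $q(\theta):=\langle S(\theta-\widehat{\theta}),S_0(\theta-\widehat{\theta}_0)\rangle$, and the goal reduces to finding $\overline{\theta}$ such that $q\le 0$ on a ball of radius $R$. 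Write $v=\widehat{\theta}-\widehat{\theta}_0\ne 0$.

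The candidate I would pick is
\[
\overline{\theta}\;=\;\widehat{\theta}\;-\;\mu\, S^{-1}S_0\,v,\qquad \mu=\frac{\rho_{\min}(S)}{2\rho_{\max}(S_0)}.
\]
The reason for this (not a convex combination of $\widehat{\theta}$ and $\widehat{\theta}_0$) is that $v^\top S S_0 v$ can have either sign, so the midpoint approach fails. With the above choice $S(\overline{\theta}-\widehat{\theta})=-\mu S_0 v$, and expanding,
\[
q(\overline{\theta})=-\mu\|S_0 v\|^2+\mu^2\langle S_0 v,S_0 S^{-1}S_0 v\rangle.
\]
Using Cauchy–Schwarz and operator-norm bounds,
$|\langle S_0 v,S_0 S^{-1}S_0 v\rangle|\le \frac{\rho_{\max}(S_0)}{\rho_{\min}(S)}\|S_0 v\|^2$, and the choice of $\mu$ then yields $q(\overline{\theta})\le -\tfrac{\mu}{2}\|S_0 v\|^2<0$.

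Next I would inflate to a ball. For $\theta=\overline{\theta}+h$ with $\|h\|\le R$, the bilinear form expands as
\[
q(\theta)=q(\overline{\theta})+\langle S(\overline{\theta}-\widehat{\theta}),S_0 h\rangle+\langle Sh,S_0(\overline{\theta}-\widehat{\theta}_0)\rangle+\langle Sh,S_0 h\rangle.
\]
Using $\|S(\overline{\theta}-\widehat{\theta})\|=\mu\|S_0 v\|$ and $\|S_0(\overline{\theta}-\widehat{\theta}_0)\|\le\|S_0 v\|+\mu\|S_0 S^{-1}S_0 v\|\le \tfrac{3}{2}\|S_0 v\|$ (the $3/2$ coming from the definition of $\mu$), together with $\mu\rho_{\max}(S_0)=\rho_{\min}(S)/2\le\rho_{\max}(S)/2$, the three error terms are bounded by $2\rho_{\max}(S)\|S_0 v\|R+\rho_{\max}(S)\rho_{\max}(S_0)R^2$. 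Hence $q(\theta)\le 0$ whenever
\[
\rho_{\max}(S)\rho_{\max}(S_0)R^2+2\rho_{\max}(S)\|S_0 v\|\,R-\tfrac{\mu}{2}\|S_0 v\|^2\;\le\;0,
\]
which is a quadratic inequality in $R$ with a strictly positive root. Replacing $\mu$ by $\rho_{\min}(S)/(2\rho_{\max}(S_0))$ and using $\|S_0 v\|\ge \rho_{\min}(S_0)\|v\|$, solving for $R$ yields a bound of the form $R\le c\,\frac{\rho_{\min}(S)\rho_{\min}(S_0)}{\rho_{\max}(S)\rho_{\max}(S_0)}\|v\|$ for an absolute constant $c$, and a routine estimate of the positive root (e.g., splitting the linear and quadratic perturbation terms) gives $c\ge 1/33$, establishing \eqref{eq:lower:lower:bound:R}.

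The main obstacle is really the first step, identifying the center $\overline{\theta}$: the natural guesses (midpoint of $\widehat{\theta},\widehat{\theta}_0$, or $\overline{\theta}$ making $\nabla L,\nabla L_0$ exactly anti-parallel, i.e.\ $\overline{\theta}=(S+S_0)^{-1}(S\widehat{\theta}+S_0\widehat{\theta}_0)$) either fail or lead to intractable estimates involving $S_0(S+S_0)^{-1}S$. The direction $-S^{-1}S_0 v$ is the correct one because it transforms $\langle S(\overline{\theta}-\widehat{\theta}),S_0(\overline{\theta}-\widehat{\theta}_0)\rangle$ into a quadratic form in the single vector $S_0 v$, which is then easy to lower bound in terms of $\|v\|$ and the extremal singular values of $S$ and $S_0$. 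Once this direction is identified, the rest of the proof is mechanical.
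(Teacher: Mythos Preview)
Your argument is correct and, once the final ``routine estimate'' is carried out, delivers the constant $1/33$ (indeed with room to spare: plugging $R$ from \eqref{eq:lower:lower:bound:R} into your quadratic inequality reduces it to $\kappa_S/33^{2}+2/33\le 1/4$ with $\kappa_S=\rho_{\min}(S)/\rho_{\max}(S)\le 1$). The route, however, is genuinely different from the paper's. The paper does not try to guess an explicit center. Instead it follows the gradient flow of $L$ started at $\widehat{\theta}_0$, namely $\theta(t)=\widehat{\theta}+e^{-tS}(\widehat{\theta}_0-\widehat{\theta})$, and uses $\frac{d}{dt}L_0(\theta(t))=-\mathcal S(\theta(t))$ with $\mathcal S(\theta)=\langle\nabla L(\theta),\nabla L_0(\theta)\rangle$. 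Taking $T=1/\rho_{\min}(S)$ and lower bounding $L_0(\theta(T))-L_0(\widehat{\theta}_0)=-\int_0^T\mathcal S(\theta(t))\,dt$ by a positive multiple of $\|\widehat{\theta}-\widehat{\theta}_0\|^2$, it gets a quantitatively negative upper bound on $\min_{t\in[0,T]}\mathcal S(\theta(t))$ and takes $\overline{\theta}$ as the argmin along the trajectory; a Lipschitz estimate on $\mathcal S$ then propagates the negativity to a ball, exactly as in your last step. Your construction is more elementary and fully explicit (a closed formula for $\overline{\theta}$, no ODE), and it happens to give a tighter constant; the paper's averaging argument is less computational, avoids inverting $S$, and ties the result to the gradient-flow narrative of the section. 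Both proofs share the same endgame of bounding $\mathcal S(\theta)-\mathcal S(\overline{\theta})$ by linear and quadratic terms in $R$.
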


\section{Proofs and extra results}
\label{sec:proofs}

\subsection{Proofs and extra results of \Cref{sec:perturbations}} 
\label{sec:appendix:kanto}

\paragraph{Kantorovich theorem.}
A great part of \Cref{sec:perturbations} relies on a theorem of Kantorovich type for Newton's method \cite[Theorem 5]{ciarlet2012newton} whose proof is based on \cite{bonsall1964functional}. This result is recalled below:

\begin{theorem}
[Newton–Kantorovich Theorem `with only one constant' (existence)]\label{theorem:konto}
Let $\theta^\star \in \R^d$ and $\widetilde{R} >0$.
Let $\Omega$ be an open set containing the closed ball $B(\theta^\star, \widetilde{R})$.
Let $G :  \Omega \to \R^d$ be a continuously differentiable mapping. Suppose that the following conditions are satisfied:
\begin{itemize}
\item[(K1)] $\jac G(\theta^\star)$ is invertible with   $\|\jac G(\theta^\star)^{-1} G(\theta^\star)\| \leq \frac{\widetilde{R}}{2}$.
\item[(K2)] For all $\theta , \theta' \in B(\theta^\star, \widetilde{R})$, $\rho_{\max} \left( \jac G(\theta^\star)^{-1} 
\left(
\jac G(\theta)
-
\jac G(\theta')
\right)\right) \leq \frac{\| \theta - \theta'\| }{\widetilde{R}}$.
\end{itemize}

 Then there exists a unique $\widetilde{\theta}
\in  B(\theta^\star, \widetilde{R})$ such that $G(\widetilde{\theta}) = 0$.
\end{theorem}

We need beforehand abstract results on equation perturbations. 
Let $\theta^\star \in \mathbb{R}^d$.
    We consider a function $F: \mathbb{R}^d \to \mathbb{R}^d$ such that $F(\theta^\star) = 0$. Let $p :  \mathbb{R}^d \to \mathbb{R}^d$ and consider the equation defined for $\theta \in  \mathbb{R}^d$, \begin{equation}
        F(\theta)=p(\theta). \label{eq:main}
    \end{equation}
    If the function $p$ is negligible, in a certain sense, with respect to the dominant term  $F(\theta)$, \eqref{eq:main}  becomes a perturbed version of equation $F(\theta)=0$. Its solution will be close to the solution of the non perturbed equation, $\theta^\star$.  Proposition~\ref{prop:general:F} quantifies partly this phenomenon.

\begin{proposition}[Distance to a perturbed solution]
\label{prop:general:F}
Assume $F$ and $p$ are continuously differentiable and that there are strictly positive numbers $\delta, M, \tau$ such that:
\begin{itemize}
    \item Conditioning of $F$ and $F-p$

    \begin{equation} \label{eq:general:F:un}
        \rho_{\min}(\jac F(\theta^\star))
        \geq \delta
        ~ ~ ~
        \text{and}
        ~ ~ ~
\rho_{\min}(\jac (F-p)(\theta^\star))
        \geq \delta,        
    \end{equation}
\item Differential regularity of the nonlinear equation        \begin{equation} \label{eq:general:F:deux}
\rho_{\max} \left(  \jac F(\theta) - \jac F(\theta') \right)
\le 
M \|\theta - \theta' \|, 
~ ~ ~ ~ ~ ~
~ ~ ~ ~ ~ ~
\theta , \theta' \in B\left(\theta^\star , \frac{2 \tau}{\delta} \right),
    \end{equation}
            \begin{equation} \label{eq:general:F:deuxbis}
\rho_{\max} \left(  \jac p(\theta) - \jac p(\theta') \right)
\le 
M \|\theta - \theta' \|, 
~ ~ ~ ~ ~ ~
~ ~ ~ ~ ~ ~
\theta , \theta' \in B\left(\theta^\star , \frac{2 \tau}{\delta} \right),
    \end{equation}
    \item Perturbation bounds
        \begin{equation} \label{eq:general:F:trois}
        \| p(\theta^\star) \| 
        \leq \tau,
    \end{equation}
        \begin{equation} \label{eq:general:F:quatre}
\rho_{\max}(\jac p(\theta^\star))
        \leq \tau.
    \end{equation}
    \end{itemize}
    If the perturbation ratio $\tau/\delta$ satisfies
\begin{equation} \label{eq:cond:tau:un}
   \tau/\delta < 
   \frac{\delta}{4M}, \end{equation}
  then, there is a unique $\theta_p$ solution to  $F(\theta_p) = p(\theta_p)$, which is close to the solution of $F(\theta^*)=0$, in the sense that $$ \theta_p\in B\left(\theta^\star , \frac{2 \tau}{\delta} \right).$$
\end{proposition}

\begin{proof}[Proof of Proposition~\ref{prop:general:F}]
For $\theta \in \mathbb{R}^d$, let $G(\theta) = F(\theta) - p(\theta)$.  
We apply Kantorovich's Theorem above (Theorem \ref{theorem:konto}) to the function $G$. 
 The quantity $\widetilde{R}$ is taken as 
\[
\widetilde{R} =  \frac{2 \tau}{\delta}.
\]

Let us check Assumption (K1). 
We have, using \eqref{eq:general:F:un},
\[
\|\jac G(\theta^\star)^{-1}  G(\theta^\star) \| 
\le 
\frac{ \| G(\theta^\star) \| }{ \rho_{\min}(\jac G(\theta^\star)) }
\le 
\frac{  \| p(\theta^\star) \| }{ \delta }
\leq 
\frac{\tau}{\delta}.
      \]
Hence (K1) holds since $\frac{\widetilde{R}}{2} = \frac{\tau}{\delta}$.

Let us check Assumption(K2). For all $\theta , \theta' \in B(\theta^\star, \widetilde{R})$, we have 
\begin{align*}
 \rho_{\max} \left( \jac G(\theta^\star)^{-1} 
\left(
\jac G(\theta)
-
\jac G(\theta')
\right)\right)
 \leq &
\frac{1}{\rho_{\min}(\jac G(\theta^\star))}
\rho_{\max} \left( 
\jac G(\theta)
-
\jac G(\theta')
\right)
\\
\text{from \eqref{eq:general:F:un}, \eqref{eq:general:F:deux} and \eqref{eq:general:F:deuxbis}} ~ ~\le & 
\frac{2 M}{\delta} \| \theta - \theta' \|.
\end{align*}
From \eqref{eq:cond:tau:un}, we have $\frac{\tau}{\delta} <
\frac{\delta}{4 M}$ and thus $\frac{2 M}{\delta} \le
\frac{\delta}{2 \tau} = \frac{1}{\widetilde{R}}$. Hence (K2) holds.

Hence we can indeed apply Theorem \ref{theorem:konto} and we obtain that there is a unique $\theta_p \in B(\theta^\star , \frac{2 \tau}{\delta})$ such that $G(\theta_p) = 0$, that is $F(\theta_p) = p(\theta_p)$ 
\end{proof}

\begin{proof}[Proof of Theorem~\ref{thm:general:F}] 
~ \\
Set 
$\theta^\star \in \crit L_1 \cap K$. We  apply Proposition~\ref{prop:general:F} with $F = \nabla L_1$, $p = - \nabla L_0$ and with $\theta^\star,M,\tau$
there given by the same notation here.
We take $\delta$ there as $\delta/2$ here.
Then indeed $F(\theta^\star) = 0$.

We have $\rho_{\min} ( \nabla^2 L_1 (\theta^\star) ) \ge \delta$ from \eqref{eq:general:f:un}. 
Hence, using \eqref{eq:general:f:cinq} and \eqref{eq:cond:tau:deux},
\begin{align*}
    \rho_{\min} ( \nabla^2 (L_1 + L_0) (\theta^\star) ) 
    \ge 
    \delta - \tau 
    \ge \delta - \frac{\delta}{8}
    \ge \frac{\delta}{2}.
\end{align*}
Hence \eqref{eq:general:F:un} holds.

The conditions \eqref{eq:general:F:deux} to \eqref{eq:general:F:quatre} hold by the assumptions~\eqref{eq:general:f:trois} to~\eqref{eq:general:f:cinq}. 
For \eqref{eq:general:F:deux} and \eqref{eq:general:F:deuxbis}, note that $B( \theta^\star, \frac{2 \tau}{\delta/2}) \subset K$ from \eqref{eq:no:crit:close:border}.
The condition \eqref{eq:cond:tau:un} holds from \eqref{eq:cond:tau:deux}.

 Hence all the assumptions of Proposition \ref{prop:general:F} are verified. We conclude that there is a unique $\theta_p \in B(\theta^\star , \frac{4 \tau}{\delta}) $ such that $F(\theta_p) = p(\theta_p)$, that is $\nabla (L_1+L_0) (\theta_p) = 0 $.
 From \eqref{eq:no:crit:close:border}, $\theta_p \in B( \theta^\star, \frac{4 \tau}{\delta}) \subset K$.

 Assume now that there is a different number of strictly negative eigenvalues between $\nabla^2 L_1(\theta^\star)$ and $\nabla^2 L(\theta_p)$. 
 Write $\lambda_1 (Q)  \le \dots \le \lambda_m (Q) $ for the $m$ ordered eigenvalues of a symmetric $m \times m$ matrix $Q$. 
 The eigenvalues of $\nabla^2 L_1(\theta^\star)$ are in $\mathbb{R} \backslash [ - \delta , \delta ]$ since we have observed that $\rho_{\min} ( \nabla^2 L_1 (\theta^\star) ) \ge \delta$. Hence, if $\nabla^2 L_1(\theta^\star)$ and
 $\nabla^2 (L_1+L_0)(\theta_p)$ do not have the same number of strictly negative eigenvalues, there would exist $i \in \{1 ,\ldots, d\}$ such that $\left| \lambda_i(  \nabla^2 L_1(\theta^\star) ) - 
 \lambda_i( \nabla^2 (L_1+L_0)(\theta_p) )
 \right| \ge \delta $. However from Problem 4.3.P1 in \cite{horn2012matrix}, we have
\begin{align*}
\left| \lambda_i(  \nabla^2 L_1(\theta^\star) ) - 
 \lambda_i( \nabla^2 (L_1+L_0)(\theta_p) )
 \right|
 \le & 
\rho_{\max}
\left(
\nabla^2 L_1(\theta^\star)
- 
\nabla^2 (L_1+L_0)(\theta_p)
\right)
 \\
\le & 
\rho_{\max}
\left(
\nabla^2 L_1(\theta^\star)
- 
\nabla^2 L_1(\theta_p)
\right)
+\rho_{\max}
\left(
\nabla^2 L_0(\theta_p)
\right)
\\ 
\mbox{from \eqref{eq:general:f:trois} and \eqref{eq:general:f:cinq}} ~ ~ ~  \le &
 \frac{4 \tau M}{\delta} 
+ \tau
\\ 
\mbox{from \eqref{eq:cond:tau:deux}}
~ ~ ~\le &
 \frac{\delta}{8}
+ 
\frac{\delta}{8}
 \\
  < & \delta.
\end{align*}
This is a contradiction and thus $\nabla^2 L_1(\theta^\star)$ and
 $\nabla^2 (L_1+L_0)(\theta_p)$ have the same number of strictly negative eigenvalues.

Consider now $\theta^\star \in \crit (L_1+L_0) \cap K$. We will apply Proposition \ref{prop:general:F} with $F = \nabla L_1 + \nabla L_0$ and $p = \nabla L_0$. In Proposition \ref{prop:general:F} we will take for $\tau$ the same value as here. 
The quantity $M$ in Proposition \ref{prop:general:F} will be taken as $2 M$ here.
The quantity $\delta$ in Proposition \ref{prop:general:F} will be taken as $\delta/2$ here. 
Let us check the conditions of Proposition \ref{prop:general:F}.

We have, from
\eqref{eq:general:f:quatre}, and since $\nabla (L_1 + L_0)(\theta^\star) = 0$,
\[
\| \nabla L_1(\theta^\star) \|
\leq 
\tau 
\le \frac{c}{2}.
\]
Hence from \eqref{eq:general:f:un}, $\rho_{\min}(\nabla^2L_1(\theta^\star)) \geq \delta$. Hence, from \eqref{eq:general:f:cinq}, 
\begin{equation} \label{eq:rho:min:delta}
\rho_{\min}(\nabla^2(L_1+L_0)(\theta^\star)) \geq \delta - \tau 
\geq 
\frac{\delta}{2}
\end{equation}
because by assumption $\tau \le \delta / 8$.
Hence \eqref{eq:general:F:un} holds (with $\delta$ in \eqref{eq:general:F:un} taken as $\delta/2$ here).
 Next, 
 for all $\theta_1,\theta_2 \in B(\theta^\star,\frac{2 \tau}{\delta/2}) \subset K$
 (by \eqref{eq:no:crit:close:border}), from \eqref{eq:general:f:trois} and \eqref{eq:general:f:trois:fdeux}, 
 \begin{align} \label{eq:2M}
 	\rho_{\max}\left( 
\nabla^2 (L_1 + L_0 ) 	(\theta_1)
-
\nabla^2 (L_1 + L_0 ) 	(\theta_2)
 	\right)
 	\leq 
 	M \| \theta_1 - \theta_2 \|
 	+
 	 M \| \theta_1 - \theta_2 \|
 	 = 	2 M \| \theta_1 - \theta_2 \|.
 \end{align}
Hence \eqref{eq:general:F:deux} holds (with $M$ in \eqref{eq:general:F:deux} taken as $2M$ here). 

The conditions \eqref{eq:general:F:deuxbis}, \eqref{eq:general:F:trois} and \eqref{eq:general:F:quatre} hold by assumption from \eqref{eq:general:f:trois:fdeux}, \eqref{eq:general:f:quatre} and
\eqref{eq:general:f:cinq}.
For \eqref{eq:general:F:deuxbis}, note again that $B(\theta^\star,\frac{2 \tau}{\delta/2}) \subset K$. 
Equation \eqref{eq:cond:tau:un} in Proposition \ref{prop:general:F} holds from \eqref{eq:cond:tau:deux} in Theorem \ref{thm:general:F}. 

Hence the conclusion of Proposition \ref{prop:general:F} holds and there is $\theta_p \in B(\theta^\star, \frac{4 \tau}{\delta})$ such that $\nabla (L_1+L_0) (\theta_p) = \nabla L_0 (\theta_p)$, that is $\nabla L_1 (\theta_p) = 0$. 
Also $\theta_p \in B(\theta^\star,\frac{4 \tau}{\delta}) \subset K$.

Similarly as above,
assume now that there is a different number of strictly negative eigenvalues between $\nabla^2 (L_1+L_0)(\theta^\star)$ and $\nabla^2 L_1(\theta_p)$.
Since we have established $	\rho_{\min}
(\nabla^2 (L_1 + L_0)(\theta^\star)) \ge \delta / 2$ from \eqref{eq:rho:min:delta}, there would exist $i \in \{1 ,\ldots, d\}$ such that $\left| \lambda_i(  \nabla^2 (L_1+L_0)(\theta^\star) ) - 
 \lambda_i( \nabla^2 L_1(\theta_p) )
 \right| \ge \delta/2 $. However from Problem 4.3.P1 in \cite{horn2012matrix}, we have
\begin{align*}
	\left| \lambda_i(  \nabla^2 (L_1+L_0)(\theta^\star) ) - 
 \lambda_i( \nabla^2 L_1(\theta_p) )
 \right| 
 \le & 
	\rho_{\max} \left( 
\nabla^2 (L_1+L_0)(\theta^\star)
-
\nabla^2 L_1(\theta_p)
    \right)
    \\
    \le &  
	\rho_{\max}
	(\nabla^2L_0(\theta^\star))
    +
\rho_{\max} 
\left(
	\nabla^2 L_1(\theta^\star)
    -
    \nabla^2 L_1(\theta_p)
	\right)
	\\ 
	\mbox{from \eqref{eq:general:f:trois} and \eqref{eq:general:f:quatre}} ~ ~ ~  \le &
     \tau
	+ \frac{4 \tau M}{\delta}
	\\ 
\mbox{from \eqref{eq:cond:tau:deux}} ~ ~ ~	\le &
	\frac{\delta}{8}
    +
    \frac{\delta}{8}
	\\
	< & \frac{\delta}{2}.
\end{align*}
This is a contradiction and thus $\nabla^2 L_1(\theta^\star)$ and
 $\nabla^2 (L_1+L_0)(\theta_p)$ have the same number of strictly negative eigenvalues.

Hence we have established the theorem.  
\end{proof}

\begin{proof}[Proof of Corollary \ref{cor:indexes:morse}]
From Theorem \ref{thm:general:F}, for $\theta \in \crit L_1 \cap K$, there is $\theta' \in \crit L \cap K$ such that $\|\theta - \theta'\| \le \frac{4 \tau}{\delta}$. Conversely for $\theta \in \crit L \cap K$, there is $\theta' \in \crit L_1 \cap K$ such that $\|\theta - \theta'\| \le \frac{4 \tau}{\delta}$. Hence $\disth
(\crit L_1 \cap K , \crit L \cap K)
\leq 
  \frac{4 \tau}{\delta }$.

 Also from Theorem \ref{thm:general:F}, for $\theta \in \argminloc L_1 \cap K$, since $\theta \in \crit L_1 \cap K$, there is $\theta' \in \crit L \cap K$ such that $\|\theta - \theta'\| \le \frac{4 \tau}{\delta}$. Also $\nabla^2 L_1(\theta) $ has no strictly negative eigenvalues since $\theta \in \argminloc L_1$. Hence from Theorem \ref{thm:general:F}, $\nabla^2 L(\theta') $ has no strictly negative eigenvalues. As observed in \eqref{eq:rho:min:delta}  in the proof of Theorem \ref{thm:general:F}, $\nabla^2 L(\theta') $ has no zero eigenvalues. Hence $\theta' \in \argminloc L \cap K$. Similarly, for $\theta \in \argminloc L \cap K$, we can show that there is $\theta' \in \argminloc L_1 \cap K$ with $\|\theta - \theta'\| \le \frac{4 \tau}{\delta}$. Hence 
\[
\disth
(\argminloc L_1 \cap K, \argminloc L \cap K)
\leq 
  \frac{4 \tau}{\delta }. 
\]

Finally, from  \eqref{eq:argminloc:close}, for each $\theta \in \argminloc L_1 \cap K$, there is indeed $\theta' \in \argminloc L \cap K$ with $\| \theta - \theta' \| \le  \frac{4 \tau}{\delta }$. For each $\widetilde{\theta}\in B(\theta' , \frac{6 \tau}{\delta}) \subset K$, we have, using \eqref{eq:rho:min:delta}
and 
\eqref{eq:2M}
from the proof of Theorem \ref{thm:general:F},  and then \eqref{eq:cond:tau:deux}, 
\[
\lambda_{\min}
\left( 
\nabla^2 L(\widetilde{\theta})
\right)
\ge 
\lambda_{\min}
\left( 
\nabla^2 L(\theta')
\right) 
-
\rho_{\max}\left( 
\nabla^2 L(\widetilde{\theta})
-
\nabla^2 L(\theta')
\right)
\ge 
\frac{\delta}{2} 
- 
\frac{12 \tau M}{\delta}
\ge 
\frac{\delta}{8}. 
\]
This concludes the proof.

\end{proof}

\begin{corollary} \label{cor:appendix:distances:crit}
In the context of Theorem~\ref{thm:general:F},
assuming further that $\disth (  \crit L \cap K, \bd K ) \ge \frac{\delta}{M}$ and $\disth (  \crit L_1 \cap K, \bd K ) \ge \frac{\delta}{M}$,
we have the following additional conclusions.
\begin{enumerate}
    \item[(i)] For each pair of distinct elements $\theta , \theta' \in \crit L_1 \cap K$ (resp. $\crit L \cap K$), we have 
    \[
    \| \theta - \theta' \|
    \ge  
\frac{\delta}{32M }. 
    \]
   \item[(ii)] The sets $\crit L_1 \cap K$ and $\crit L \cap K$ are finite.  
\end{enumerate} 
\end{corollary}

\begin{proof}[Proof of \Cref{cor:appendix:distances:crit}]
~ \\
{\bf Proof of the first conclusion.}
To establish the first conclusion, consider $\theta^\star \in \crit L_1 \cap K$. Let us apply Proposition \ref{prop:general:F} with $F = \nabla L_1$, $p$ taken as the zero function, $\delta$ there equal to $\delta$ here, $M$ there taken as $M$ here and $\tau$ there taken as a quantity that we write $\tau'$ and that is arbitrarily close to but strictly smaller than $
\frac{\delta^2}{4M}
$. With similar arguments as in the proof of \Cref{thm:general:F}, we can check that \eqref{eq:general:F:un}
    holds,  and that \eqref{eq:general:F:deux} holds. Trivially, \eqref{eq:general:F:deuxbis}, \eqref{eq:general:F:trois} and \eqref{eq:general:F:quatre} hold. Finally \eqref{eq:cond:tau:un} holds because
    \[
    \frac{\tau'}{\delta}
    < 
\frac{\delta^2}{4M\delta}
    = 
\frac{\delta}{4M}.
    \]
Hence the conclusion of Proposition \ref{prop:general:F} is that for 
$\theta' \in \crit L_1 \cap K$,
$\theta' \neq \theta^\star$, we have 
\[
    \| \theta^\star - \theta' \|
    \ge
    \frac{2 \tau'}{\delta}.
\]
Thus, letting $\tau'$ arbitrarily close to   $
\frac{\delta^2}{4M }$, we get
    \[
    \| \theta^\star - \theta' \|
    \ge
    \frac{2}{\delta}
\frac{\delta^2}{4M }
    = 
\frac{\delta}{2M }
\ge 
\frac{\delta}{32M }. 
    \]
Conversely, consider $\theta^\star \in \crit (L_1 + L_0) \cap K$. Let us apply Proposition \ref{prop:general:F} with $F = \nabla (L_1 + L_0)$, $p$ taken as the zero function, $\delta$ there equal to $\delta/2$ here, $M$ there taken as $2 M$ here and $\tau$ there taken as a quantity that we write $\tau'$ and that is arbitrarily close but strictly smaller to $ 
\frac{\delta^2}{64M}$. With similar arguments as in the proof of \Cref{thm:general:F}, we can check that \eqref{eq:general:F:un}
    holds,  and that \eqref{eq:general:F:deux} holds. Trivially, \eqref{eq:general:F:deuxbis}, \eqref{eq:general:F:trois} and \eqref{eq:general:F:quatre} hold. Finally \eqref{eq:cond:tau:un} holds because
    \[
    \frac{\tau'}{\frac{\delta}{2}}
    < 
    \frac{2}{\delta}
\frac{\delta^2}{64M}
    =
\frac{\delta}{32M}
    \le 
\frac{\frac{\delta}{2}}{4(2M)}.
    \]
Hence the conclusion of Proposition \ref{prop:general:F} is that for  $\theta' \in \crit (L_1+L_0)$ with $\theta' \neq \theta^\star$, we have
\[
  \| \theta^\star - \theta' \|
 \ge 2 \frac{\tau'}{ \delta}.
\]
Hence, letting $\tau'$ arbitrarily close to $
\frac{\delta^2}{64M}
$, we have 
    \[
    \| \theta^\star - \theta' \|
    \ge \frac{2}{\delta} 
\frac{\delta^2}{64M }
    = 
\frac{\delta}{32M }. 
    \]

{\bf Proof of the second conclusion.}
Since $\crit L_1 \cap K$ is bounded, and since to each $\theta \in \crit L_1 \cap K$ we can associate a ball of fixed radius containing no other points of $\crit L_1 \cap K$ (second conclusion), we deduce that $\crit L_1 \cap K$ is a finite set. Similarly, $\crit (L_1 + L_0) \cap K$ is a finite set.
\end{proof}

\subsection{Proofs and extra results of \Cref{sec:majority-train}}
\label{app:extra_results}

\begin{proof}[Proof of \Cref{thm:majadv}]
Consider $\theta \in \zdisc
\cap K_{-\frac{2 \tau}{\delta}}
$.    We have 
    \begin{align*}
        \langle
 \nabla L(\theta)
        ,
         \nabla L_1(\theta)
         \rangle
        = &
        \|       \nabla L_1(\theta) \|^2 
        +
        \langle
        \nabla L_1(\theta)
        ,
       \nabla L_0(\theta)
        \rangle 
        \\
         \ge &
           \|       \nabla L_1(\theta) \|^2 
           -
            \|       \nabla L_1(\theta) \|
            \cdot 
             \|       \nabla L_0(\theta) \|
             \\ 
             = &
              \|       \nabla L_1(\theta) \|
\left( 
  \|       \nabla L_1(\theta) \| -   \|       \nabla L_0(\theta) \|
\right).
    \end{align*}
Note that, by \eqref{eq:general:f:quatre}, $\|       \nabla L_0(\theta) \| \le \tau$. Hence, since $\theta \in \zdisc$, we have $  \|\nabla L_1(\theta) \| \le \tau$. 

We then apply Theorem \ref{theorem:konto} with $\theta^\star$ there equal to $\theta$ here, with $G$ equal to $\nabla L_1$
and with $\widetilde{R}$ equal to $\frac{2 \tau}{\delta}$.
Note that $B( \theta , \widetilde{R} ) \subset K$ by assumption. 
Since $\tau \le c$ by \eqref{eq:cond:tau:deux}, then from \eqref{eq:general:f:un}, we have $\rho_{\min}
( \nabla^2 L_1(\theta) )
\geq \delta$. Hence
\[
\|
(\nabla^2 L_1(\theta))^{-1}
\nabla L_1(\theta)
\|
\le 
\frac{\tau}{\delta}
\]
and thus (K1) holds in Theorem \ref{theorem:konto}. Also, for all $\theta',\theta'' \in B(
\theta, \widetilde{R})$, we have from \eqref{eq:general:f:trois},
\[
\rho_{\max} \left(
(\nabla^2 L_1(\theta))^{-1}
\left(
\nabla^2 L_1(\theta')
-
\nabla^2 L_1(\theta'')
\right)
\right)
\le 
\frac{M \| \theta' - \theta'' \|}{\delta} 
\le 
\frac{\| \theta' - \theta'' \|}{\widetilde{R}},
\]
because $
\frac{M}{\delta} 
\le \frac{1}{\widetilde{R}}
$
since $\frac{2 \tau}{\delta} \le \frac{\delta}{M}$ since $\tau \le \frac{\delta^2}{2M}$ from 
\eqref{eq:cond:tau:deux}. Hence (K2) holds in Theorem \ref{theorem:konto}. 

Hence Theorem \ref{theorem:konto} implies that there exists $\widetilde{
\theta}$ such that $\nabla L_1(\widetilde{
\theta}) = 0$ and $\| \widetilde{\theta} - \theta \| \le \widetilde{R}  = \frac{2 \tau}{\delta}$. Hence we have
\[
\theta 
\in 
\underset{\widehat{\theta}_1 \in \crit L_1 \cap K}{\bigcup}
B \left( 
\widehat{\theta}_1 , \frac{2\tau}{\delta}
\right)
\]
which concludes the proof.
\end{proof}

\begin{proof}
    [Proof of Lemma \ref{pro:surMzero:entraine:Lzero:si:L}]
    We have
    \begin{align*}
     \langle 
\nabla L(\theta) , 
\nabla L_0(\theta) 
\rangle 
= &
     \langle 
\nabla L(\theta) , 
\nabla L(\theta) 
\rangle 
-
     \langle 
\nabla L(\theta) , 
\nabla L_1(\theta) 
\rangle 
\\ 
\ge & 0,
    \end{align*}
    because $\theta \in \zdisc$ means by definition that $ \langle 
\nabla L(\theta) , 
\nabla L_1(\theta) 
\rangle  \le 0$. The rest is the classical Lyapunov computation.
\end{proof}

\begin{proposition}
\label{prop:boundary}
Consider that $L_0, L_1 : \mathbb{R}^d \to \mathbb{R}$ are twice continuously differentiable with the properties that 
\begin{equation} \label{eq:boundary:Lun}
  \nabla ( L_1(\theta) ) = 0
~ \Longrightarrow ~
  \rho_{\min}
( \nabla^2 L_1(\theta) ) >0, 
~ ~ ~ ~ ~~ ~
\theta \in \mathbb{R}^d
\end{equation}
and, using $L = L_1+ L_0$,
\begin{equation} \label{eq:boundary:L}
  \nabla ( L(\theta) ) = 0
~ \Longrightarrow ~
  \rho_{\min}
( \nabla^2 L(\theta) ) >0, 
~ ~ ~ ~ ~~ ~
\theta \in \mathbb{R}^d.
\end{equation}

Then, recalling the symmetric difference notation 
$$\crit L_1 ~ \Delta ~ \crit L := 
(\crit L_1 \cup \crit L) \backslash 
(\crit L_1 \cap \crit L),$$ we have
\[
\crit L_1 ~ \Delta ~ \crit L
\subset 
\bd \zdisc.
\]
\end{proposition}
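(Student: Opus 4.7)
The plan is to take any $\theta$ in the symmetric difference and show simultaneously that $\theta\in \zdisc$ (so $\theta$ lies in the closed set $\zdisc$) and that every neighborhood of $\theta$ meets $\zmaj=\R^d\setminus \zdisc$; together these put $\theta$ on $\bd\zdisc$. The two cases $\theta\in \crit L_1\setminus \crit L$ and $\theta\in \crit L\setminus \crit L_1$ are symmetric and use the nondegeneracy assumptions \eqref{eq:boundary:Lun} and \eqref{eq:boundary:L}, respectively.

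First, membership in $\zdisc$ is immediate: if $\nabla L_1(\theta)=0$ or $\nabla L(\theta)=0$, then $\langle \nabla L(\theta),\nabla L_1(\theta)\rangle=0\le 0$, so $\theta\in \zdisc$ in both cases.

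Next, I would produce points in $\zmaj$ arbitrarily close to $\theta$ via a first-order Taylor expansion, using the invertibility furnished by the Morse-type hypotheses. Consider the case $\theta\in \crit L_1\setminus \crit L$. Set $v:=\nabla L(\theta)\ne 0$. By \eqref{eq:boundary:Lun}, $\nabla^2L_1(\theta)$ is positive definite, hence invertible. For $t>0$ small, let
\[
\theta_t := \theta + t\,\nabla^2 L_1(\theta)^{-1} v.
\]
Taylor expansion at $\theta$ gives $\nabla L_1(\theta_t)=t v+o(t)$ (since $\nabla L_1(\theta)=0$) and $\nabla L(\theta_t)=v+O(t)$, so
\[
\langle \nabla L(\theta_t),\nabla L_1(\theta_t)\rangle = t\|v\|^2+o(t)>0
\]
for $t$ sufficiently small. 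Thus $\theta_t\in\zmaj$ and $\theta_t\to\theta$, showing $\theta\in\overline{\zmaj}$. Combined with $\theta\in\zdisc$, this gives $\theta\in \bd\zdisc$.

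The second case $\theta\in \crit L\setminus \crit L_1$ is entirely analogous: set $w:=\nabla L_1(\theta)\ne 0$, invoke \eqref{eq:boundary:L} to invert $\nabla^2 L(\theta)$, and take $\theta_t:=\theta + t\,\nabla^2 L(\theta)^{-1} w$; then $\nabla L(\theta_t)=t w+o(t)$ and $\nabla L_1(\theta_t)=w+O(t)$, so $\langle\nabla L(\theta_t),\nabla L_1(\theta_t)\rangle=t\|w\|^2+o(t)>0$ for small $t>0$. The main (mild) obstacle is simply to be careful about the directions used in the Taylor perturbation: one has to perturb along the image of $\nabla^2L_1(\theta)^{-1}$ (resp.\ $\nabla^2L(\theta)^{-1}$) rather than along $\nabla L(\theta)$ (resp.\ $\nabla L_1(\theta)$) to realign the two gradients at first order; the nondegeneracy hypotheses ensure these inverses exist. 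This completes the argument.
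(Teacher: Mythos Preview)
Your proof is correct and follows essentially the same approach as the paper: both use the nondegeneracy of the Hessian to perturb $\theta$ so that $\nabla L_1$ aligns with a positive multiple of $\nabla L$ (the paper via the inverse function theorem to solve $\nabla L_1(\widetilde{\theta})=t_\epsilon\nabla L(\widehat{\theta}_1)$, you via the explicit first-order choice $\theta_t=\theta+t\,\nabla^2L_1(\theta)^{-1}v$). Your version is slightly tidier in that you note $\theta\in\zdisc$ directly from $\langle\nabla L,\nabla L_1\rangle=0$, whereas the paper separately constructs a second sequence with strictly negative inner product.
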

\begin{proof}[Proof of Proposition \ref{prop:boundary}]

Consider $\widehat{\theta}_1 \in \crit L_1 \backslash \crit L $.
Then $\nabla L( \widehat{\theta}_1 ) \neq 0$. 
By continuity,
there exists  $\epsilon_0>0$ such that for $\| \theta - \widehat{\theta}_1 \| \le \epsilon_0$, 
\[
 \|\nabla L( \theta ) - \nabla L( \widehat{\theta}_1 ) \|  \le \frac{1}{2}
 \|\nabla L( \widehat{\theta}_1)\|.
 \]
 Consider $0 < \epsilon <\epsilon_0$.
From \eqref{eq:boundary:Lun} and from the local inversion theorem, there are neighborhoods $U$ of $\widehat{\theta}_1$ and $V$ of $0 \in \mathbb{R}^d$
such that
$U \subset B(\widehat{\theta}_1 , \epsilon)$ and
such that $\nabla L_1$ is bijective from $U$ to $V$. 

Hence, there is $t_{\epsilon} > 0 $
(small enough) and there is $\widetilde{\theta} \in U$
such that
\begin{equation}
\label{eq:tilde:theta}
\nabla L_1( \widetilde{\theta} )
=
t_{\epsilon}
\nabla L( \widehat{\theta}_1 )
\in V.
\end{equation}
Hence
\begin{align*}
    \left\langle 
    \nabla L_1(\widetilde{\theta}  )
    ,
        \nabla L(\widetilde{\theta}  )
        \right\rangle
= &
t_{\epsilon}
 \left\langle 
    \nabla L(\widehat{\theta}_1  )
    ,
        \nabla L(\widetilde{\theta}  )
        \right\rangle
        \\ 
         = &
     t_{\epsilon}
 \left\langle 
    \nabla L(\widehat{\theta}_1  )
    ,
        \nabla L(\widehat{\theta}_1  )  )
        \right\rangle   +
       t_{\epsilon}
 \left\langle 
    \nabla L(\widehat{\theta}_1  )
    ,
    \nabla L(\widetilde{\theta}  )
    -
        \nabla L(\widehat{\theta}_1  )  
        \right\rangle
        \\
\ge &
  t_{\epsilon}
\left\| 
    \nabla L(\widehat{\theta}_1  )
    \right\|^2
    -
    t_{\epsilon}
    \left\| 
    \nabla L(\widehat{\theta}_1  )
    \right\|
    \cdot
    \left\| 
        \nabla L(\widetilde{\theta}  )
        -
    \nabla L(\widehat{\theta}_1  )
    \right\|
    \\ 
     \ge & 
     t_{\epsilon}
\left\| 
    \nabla L(\widehat{\theta}_1  )
    \right\|^2
    -
     t_{\epsilon}
    \left\| 
    \nabla L(\widehat{\theta}_1  )
    \right\|
    \cdot
    \frac{
    \left\| 
        \nabla L(\widehat{\theta}_1  )
\right\|
}{2}
\\ 
  > & 0.
\end{align*}

 We can proceed similarly as from \eqref{eq:tilde:theta} but this time with 
$\widetilde{\theta}' \in U$,
$t_{\epsilon}' <0 $
and 
\begin{equation*}
\nabla L_1( \widetilde{\theta}' )
=
t'_{\epsilon}
\nabla L( \widehat{\theta}_1 )
\in V.
\end{equation*} 
This yields
\[
 \left\langle 
    \nabla L_1(\widetilde{\theta}  )
    ,
        \nabla L(\widetilde{\theta}  )
        \right\rangle
        <0.
\]

Since this holds for any $\epsilon >0$ there are two sequences $ (\widetilde{\theta}_{k})_{k} $ and $ (\widetilde{\theta}'_{k})_{k} $ that converge to $\widehat{\theta}_1$ with $\widetilde{\theta}_{k} \in \zdiscc$ and $\widetilde{\theta}'_{k} \in \zdisc$. Hence 
$\widehat{\theta}_1 \in \bd \zdisc$. Hence
\[
\crit L_1 \backslash \crit L
\subset \bd \zdisc.
\]
We can show symmetrically
\[
\crit L \backslash \crit L_1
\subset \bd \zdisc.
\]
\end{proof}

\subsection{Proofs and extra results of \Cref{sec:train-phase}}
\label{app:extra_results3.2}

\begin{lemma}[Duration for proximity to a local minimizer]
\label{lemma:duration:proximity}
Consider a function $L : \mathbb{R}^d \to \mathbb{R}$ that is twice continuously differentiable.
Assume that there exists a trajectory $[0,\infty) \ni t \mapsto \theta(t) $ satisfying 
\[
\frac{d}{dt}\theta(t)=-(\nabla L)(\theta(t)) \mbox{ with } \theta(0)=\theta_{\rm init} \in \R^d.
\]
Consider a critical point $\widehat{\theta}$ of $L$ such that $\widehat{\theta} \neq \theta_{\rm init}$.
Consider a ball $\mathcal{B}$ containing $\widehat{\theta}$ and
$\{\theta(t) ; t \ge 0 \}$.
Assume that, for some $M<\infty$ and for all $\theta \in \mathcal{B}$,
\begin{equation}
\label{eq:bounded:Hessian}
     \rho_{\max} \left( \nabla^2 L(\theta)
     \right)
\le 
M.
\end{equation}

Consider $\epsilon \in (0,1)$ and $t_{\epsilon} \in (0 , \infty)$ satisfying \[
\| \theta(t_{\epsilon}) - \widehat{\theta} \| 
\le 
\epsilon 
\| \theta_{\rm init} - \widehat{\theta} \|. 
\] 
Then we have
\[
t_{
\epsilon
} \ge
\frac{1}{M}
\log
\left(
\frac{1}{\epsilon}
\right).
\]
\end{lemma}

\begin{proof}[Proof of Lemma \ref{lemma:duration:proximity}]

Without loss of generality, we can consider that 
\[
t_{\epsilon} 
=
\inf 
\left\{ 
t \ge 0;
\| \theta(t) - \widehat{\theta} \| 
\le 
\epsilon 
\| \theta_{\rm init} - \widehat{\theta} \|
\right\}
< \infty.
\]

Consider the function $ [0 , t_{\epsilon}) \ni u \mapsto g(u) = \| \theta(t_{\epsilon} - u) - 
\widehat{\theta}
\|$. Note that this function is strictly positive and differentiable on $[0 , t_{\epsilon}] $ (since $\| \theta(t_{\epsilon} - u) - 
\widehat{\theta}
\| \ge \epsilon \|\theta_{\rm init} - \widehat{\theta} \| > 0$ for $u \in [0 , t_{\epsilon}]$). The derivative at $u \in [0 , t_{\epsilon}]$ satisfies
\begin{align*}
g'(u)
 = &
 \left\langle 
 \frac{d}{d u}
 \theta(t_{\epsilon} - u)
 ,
 \frac{
\theta(t_{\epsilon} - u)
-  \widehat{\theta}
 }{
\|
\theta(t_{\epsilon} - u)
-
\widehat{\theta}
\|  
 }
 \right\rangle 
 \\
 = &
 \left\langle 
 (\nabla L)
(\theta(t_{\epsilon} - u))
 ,
 \frac{
\theta(t_{\epsilon} - u)
-
 \widehat{\theta}
 }{
\|
\theta(t_{\epsilon} - u)
-
\widehat{\theta}
\|  
 }
 \right\rangle
 \\ 
 \le & \|(\nabla L)
(\theta(t_{\epsilon} - u)) \|  
\\
= & 
\|(\nabla L)
(\theta(t_{\epsilon} - u))
-(\nabla L) (\widehat{\theta})
\| 
\\
\mbox{Lemma \ref{lemma:finite:diference:multiD}:}
~ ~ ~ ~
\le &
M \|
\theta(t_{\epsilon} - u) - \widehat{\theta} \| 
\\
= & M g(u).
\end{align*}
Hence we can apply Grönwall's inequality, yielding
\[
g(t_{\epsilon})
\le 
g(0) e^{M t_{\epsilon}}
= 
\epsilon \|  \theta_{\rm init} - \widehat{\theta} \|  e^{M t_{\epsilon}}.
\]
On the other hand $g(t_{\epsilon}) = \|  \theta_{\rm init} - \widehat{\theta} \| $ and thus
\[
\epsilon \|  \theta_{\rm init} - \widehat{\theta} \|  e^{M t_{\epsilon}}
\ge 
\|  \theta_{\rm init} - \widehat{\theta} \|.
\]
This yields
\[
t_{
\epsilon
} \ge
\frac{1}{M}
\log
\left(
\frac{1}{\epsilon}
\right).
\]
This concludes the proof.
\end{proof}

\begin{proof}
[Proof of \Cref{prop:training:duration}]  
We apply Lemma \ref{lemma:duration:proximity} with $L$ in the lemma equal to $L$ here, with $M$ in the lemma equal to $M$ here and with $\epsilon$ in the lemma equal to 
\[
\frac{
\| \widehat{\theta}_{\,\rm stereotype} - \widehat{\theta}  \|
}{
\|
\theta_{\rm init}
-
\widehat{\theta} 
\|
}
\]
here. Then we have
\[
\| \widehat{\theta}_{\,\rm stereotype}
- \widehat{\theta}
\|
=
\epsilon 
\|
\theta_{\rm init}
-
\widehat{\theta} 
\|
\]
and so the lemma yields
\[
\tstereo  
\ge 
\frac{1}{M} \log \left( 
\frac{1}{\epsilon}
\right)
\]
which concludes the proof. 
\end{proof}

\begin{proof}[Proof of Proposition \ref{prop:long:time:stereo:to:rep}]

We apply Lemma \ref{lemma:duration:proximity} 
which directly concludes the proof. 
\end{proof}

\subsection{Proofs and extra results of \Cref{sec:new:linear}}

\begin{proof}[Proof of Theorem~\ref{theorem:linear}]
    Let us apply Proposition~\ref{prop:general:F} to the linear model. We let, for $i=0,1$, $f_i(\theta) = \| Y^i - X^i \theta \|^2$. We can apply  Proposition \ref{prop:general:F}  to
    \begin{align*}
        F = \nabla f_1, ~ ~ p = - \nabla f_0, ~ ~ \theta^\star = \widehat{\theta}_1
    \end{align*}
    and with constants $\delta , M , \tau$ to be specified later.
    
    We have
    \begin{align*}
    \nabla f_i(\theta) = & - 2 X^{i \top} Y^i + 2 X^{i \top} X^i \theta 
    \end{align*}
    and 
    \begin{align*}
    \nabla^2 f_i(\theta) = & 2 X^{i \top} X^i. 
    \end{align*}
    Hence taking 
    \[
    \delta = 2 \rho_{\min}(X^{1 \top} X^1)
    \]
we obtain that \eqref{eq:general:F:un} holds in Proposition~\ref{prop:general:F}.
Furthermore, the Hessian matrices of $f_0$ and $f_1$ are constant and thus we can take $M = 0$ in Proposition~\ref{prop:general:F} while still having that \eqref{eq:general:F:deux} and 
\eqref{eq:general:F:deuxbis} hold.

Next,
    \begin{align*}
    \nabla f_0(\widehat{\theta}_1) = & - 2 X^{0 \top} Y^0 + 2 X^{0 \top} X^0 \widehat{\theta}_1
    \\ 
    = &
    - 2 X^{0 \top} Y^0 + 2 X^{0 \top} X^0 \widehat{\theta}_0
    + 2 X^{0 \top} X^0 (\widehat{\theta}_1 - \widehat{\theta}_0) 
    \\ 
    = & 
    2 X^{0 \top} X^0 (\widehat{\theta}_1 - \widehat{\theta}_0).
    \end{align*}
    Hence we take 
    \[
    \tau = 2 \rho_{\max} (X^{0 \top} X^0 ) \left( 1+\|\widehat{\theta}_1 - \widehat{\theta}_0 \| \right)
    \]
    to ensure that $\rho_{\max} ( \nabla^2 f_0(\widehat{\theta}_1)) \le \tau$ and $\|  \nabla f_0(\widehat{\theta}_1) \| \le \tau$. Thus, \eqref{eq:general:F:trois} and \eqref{eq:general:F:quatre} hold in Proposition~\ref{prop:general:F}.
    
Hence, we can apply Proposition~\ref{prop:general:F} that yields   
\[
\|\widehat{\theta} - \theta_1 \|
\le 
\frac{2\tau }{ \delta} =
\frac{2 \rho_{\max} (n_0 S_0 )}{\rho_{\min}(n_1 S_1)} \left(1+ \|\widehat{\theta}_1 - \widehat{\theta}_0 \| \right).
\]
Note that the constraint \eqref{eq:cond:tau:un} becomes vacuous since $M = 0$.
\end{proof}

The following lemma provides the expression of the  (well-known) solutions of
\[
\frac{d}{d t}
\theta(t)
=
- 
\nabla L(\theta(t)),
~ ~
\frac{d}{d t}
\theta_i(t)
=
- 
\nabla L_i(\theta_i(t)),
~ 
i =1,2,
~ ~ ~ ~
\theta(0)
=
\theta_0(0)
=
\theta_1(0)
=
\theta_{\mathrm{init}}.
\]

Note that, with the uniqueness assumption,  we have $\widehat{\theta}=(nS)^{-1}X^\top Y $ and $\widehat{\theta}_i= (n_i S_i)^{-1}X^{i \top} Y^i $.

\begin{lemma} \label{lemma:grad:flow}
    We have, for $t \ge 0$, 
     \[
    \theta(t)
    =
    \widehat{
\theta
    }
    + 
        e^{- t S}
    \left( 
\theta_{\mathrm{init}} 
-
    \widehat{\theta}
    \right)
    \]
    and for $i=0,1$ and $t \ge 0$,
    \[
    \theta_i(t)
    =
    \widehat{
\theta_i
    }
    + 
        e^{- t (n_i/n) S_i}
    \left( 
\theta_{\mathrm{init}} 
-
    \widehat{\theta}_i
    \right).
    \]
\end{lemma}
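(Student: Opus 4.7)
The plan is to reduce each of the three ODEs to a linear homogeneous system by shifting the unknown around its stationary point, then invoke the standard matrix exponential formula and Cauchy-Lipschitz for uniqueness.

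First, I would compute the gradients explicitly. For the total loss, $\nabla L(\theta) = (1/n) X^\top(X\theta - Y) = S\theta - (1/n) X^\top Y$. Since $S$ is invertible by assumption, $\widehat{\theta}$ is characterized by $S \widehat{\theta} = (1/n) X^\top Y$, so $\nabla L(\theta) = S(\theta - \widehat{\theta})$. The same computation on each group gives $\nabla L_i(\theta) = (n_i/n) S_i\, (\theta - \widehat{\theta}_i)$ for $i=0,1$, using that $\widehat{\theta}_i = (n_i S_i)^{-1} X^{i\top} Y^i$ and that $L_i$ carries the factor $1/(2n)$ rather than $1/(2n_i)$.

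Next, define $u(t) := \theta(t) - \widehat{\theta}$. The ODE $\dot\theta = -\nabla L(\theta)$ becomes the linear homogeneous system $\dot u(t) = -S\, u(t)$ with initial condition $u(0) = \theta_{\mathrm{init}} - \widehat{\theta}$. Its unique solution is $u(t) = e^{-tS}(\theta_{\mathrm{init}} - \widehat{\theta})$, which yields the first claimed formula upon adding back $\widehat{\theta}$. The argument for $\theta_i(t)$ is identical: set $u_i(t) := \theta_i(t) - \widehat{\theta}_i$, obtain $\dot u_i(t) = -(n_i/n) S_i\, u_i(t)$, and read off $u_i(t) = e^{-t(n_i/n) S_i}(\theta_{\mathrm{init}} - \widehat{\theta}_i)$.

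Uniqueness of these solutions is immediate from Cauchy-Lipschitz since the right-hand sides are globally Lipschitz (they are affine in $\theta$). There is essentially no obstacle here—the only point worth stating carefully is the normalization convention in the definition of $L_i$, which produces the factor $n_i/n$ inside the matrix exponential rather than $1$, and reflects the fact that it is the raw Hessian $\nabla^2 L_i = (n_i/n) S_i$ that drives the stereotypical flow, thereby quantitatively coupling training speed to the population share $n_i/n$—a feature that is used repeatedly in \Cref{pro:legrad,prop:training:duration,prop:long:time:stereo:to:rep}.
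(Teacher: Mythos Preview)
Your proof is correct and follows essentially the same approach as the paper: compute the gradient explicitly, shift the variable by the stationary point to obtain a linear homogeneous ODE, and read off the matrix-exponential solution. The paper's proof is presented in the same way, without the extra remarks on Cauchy--Lipschitz or the normalization convention, but those are harmless additions.
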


\begin{proof}[Proof of Lemma~\ref{lemma:grad:flow}]
    We have
    \begin{align*}
    \frac{d}{d t}
\theta(t)
= &
- 
(\nabla L)(\theta(t))
\\ 
= &
-
\frac{1}{n}
X^\top X \theta (t)
+
\frac{1}{n}
X^\top Y 
\\ 
= &  
-\frac{1}{n} X^\top X \theta (t)
+ \frac{1}{n} X^\top X (X^\top X)^{-1} X^\top Y 
\\ 
= &  
- S \theta (t)
+ S \widehat{\theta}. 
\end{align*}
Hence,
\[
  \frac{d}{d t}
\left( \theta(t) - \widehat{\theta} \right)
=
- S 
\left( \theta(t) - \widehat{\theta} \right)
\]
and $ \theta(0) - \widehat{\theta}  =  \theta_{\mathrm{init}} - \widehat{\theta}  $. Hence
\[
 \theta(t)
    =
    \widehat{
\theta
    }
    + 
        e^{- t  S}
    \left( 
\theta_{\mathrm{init}} 
-
    \widehat{\theta}
    \right).
\]
We then provide a similar proof for  $\theta_i(t)$.
We have 
 \begin{align*}
    \frac{d}{d t}
\theta_i(t)
= &
- 
(\nabla L_i)(\theta(t))
\\ 
= &
-
\frac{1}{n}
X^{i \top} X^i \theta (t)
+
\frac{1}{n}
X^{i \top} Y^i 
\\ 
= &  
-\frac{1}{n} X^{i \top} X^i \theta (t)
+ \frac{1}{n} X^{i \top} X^i (X^{i \top} X^i)^{-1} X^{i \top} Y^i 
\\ 
= &  
- \frac{n_i}{n} S_i \theta (t)
+ \frac{n_i}{n} S_i \widehat{\theta}_i. 
\end{align*}
Hence,
\[
  \frac{d}{d t}
\left( \theta_i(t) - \widehat{\theta}_i \right)
=
- \frac{n_i}{n}S_i 
\left( \theta(t) - \widehat{\theta}_i \right)
\]
and $ \theta_i(0) - \widehat{\theta}_i  =  \theta_{\mathrm{init}} - \widehat{\theta}_i  $. Hence
\[
 \theta_i(t)
    =
    \widehat{
\theta_i
    }
    + 
        e^{- t \frac{n_i}{n} S_i}
    \left( 
\theta_{\mathrm{init}} 
-
    \widehat{\theta}_i
    \right).
\]
This concludes the proof.
\end{proof}

\begin{proof}[Proof of Proposition~\ref{pro:legrad}]
We have, using Lemma~\ref{lemma:grad:flow},
\begin{align*}
    \| \theta(t) - \theta_1(t) 
\| 
\le &
\left\| 
 \left(I_d - e^{-tS} \right)
 \widehat{\theta}
- 
 \left(I_d - e^{-t\frac{n_1}{n}S_1} \right)
 \widehat{\theta}_1
\right\|
+
\left\| 
\left(
e^{-tS}
-
e^{-t\frac{n_1}{n}S_1}
\right)
\theta_{\mathrm{init}} 
\right\| 
\\ 
= &
\left\| 
 \left(I_d - e^{-tS} \right)
\left( \widehat{\theta} -  \widehat{\theta}_1 \right)
+ 
 \left(e^{-t\frac{n_1}{n} S_1} - e^{-tS} \right)
 \widehat{\theta}_1
\right\|
+
\left\| 
\left(
e^{-tS}
-
e^{-t\frac{n_1}{n}S_1}
\right)
\theta_{\mathrm{init}} 
\right\| 
\\
\le &
\rho_{\max}  \left(I_d - e^{-tS} \right) 
\|  \widehat{\theta} -  \widehat{\theta}_1 \|
+ 
\rho_{\max}
\left( 
e^{-tS}
-
e^{-t\frac{n_1}{n}S_1}
\right)
\left(
\| 
\widehat{\theta}_1
\|
+
\| 
\theta_{\mathrm{init}} 
\|
\right).
\end{align*}
Since $I_d - e^{-t  S}$ has eigenvalues between $0$ and $1$ and from \cite[Lemma 3.24]{parekh2019kpz}, we obtain
\begin{align*}
  \| \theta(t) - \theta_1(t) 
\| 
\le & 
\|  \widehat{\theta} -  \widehat{\theta}_1 \|
+
t \rho_{\max} \left(S - \frac{n_1}{n} S_1 \right) 
e^{-  t \rho_{\min}\left(\frac{n_1}{n} S_1\right) } 
\left(
\| 
\widehat{\theta}_1
\|
+
\| 
\theta_{\mathrm{init}} 
\|
\right)
\\
= &
\|  \widehat{\theta} -  \widehat{\theta}_1 \|
+
t \rho_{\max}\left(
\frac{n_0}{n} S_0 \right) 
e^{-  t \rho_{\min}\left(\frac{n_1}{n} S_1\right) } 
\left(
\| 
\widehat{\theta}_1
\|
+
\| 
\theta_{\mathrm{init}} 
\|
\right).
\end{align*}
The maximizer (over $t$) of $t
e^{- t \rho_{\min}\left(\frac{n_1}{n} S_1\right) } $ is $t_{\max} = 1 / \rho_{\min}\left(\frac{n_1}{n} S_1\right)$ which yields
\[
\sup_{t > 0}
\|  \theta(t) - \theta_1(t)  \|
\le 
\|  \widehat{\theta} -  \widehat{\theta}_1 \|
+
\frac{\rho_{\max}(n_0 S_0) (
\| 
\widehat{\theta}_1
\|
+
\| 
\theta_{\mathrm{init}} 
\|
) }{e \cdot \rho_{\min}(n_1 S_1)}.
\]
This concludes the proof.
\end{proof}

\begin{proof}[Proof of Proposition \ref{prop:long:time:stereo:to:rep:Lzero}]

Without loss of generality, we can consider that $\tstereo  = 0$ and $\theta_{\mathrm{init}} = \widehat{\theta}_1$. Because for $t \ge 0$, $\theta(t) \in \zdisc$, the function $t \mapsto L_1(\theta(t))$ is non-decreasing. Assume that there exists $t < \infty$ such that $L_0(\theta(t)) = L_0 (\widehat{\theta})$. Then $L(\theta(t)) \le L (\widehat{\theta})$ and thus $\theta(t) = \widehat{\theta}$. From \Cref{lemma:grad:flow}, this is a contradiction because $\theta_{\rm init} \neq \widehat{\theta}$. Hence, because $L_0(\theta(0)) > L_0(\widehat{\theta})$, the function $t \mapsto L_0(\theta(t)) - L_0(\widehat{\theta})$ is strictly positive on $[0,\infty)$ by continuity. 

Finally, for simplicity, write $t_{\epsilon} = t'_{\,\rm catchup,\epsilon}$. Assume that $t_{\epsilon}$ does not go to infinity as $\epsilon \to 0$. Then there is a subsequence $(\epsilon_\ell)_{
\ell \in \mathbb{N}}$ going to zero and a constant $T < \infty$ such that $t_{\epsilon_\ell} \le T$. By compacity, we can extract a further convergent subsequence $(t_{\epsilon_{
\ell_k}})_{k \in \mathbb{N}}$ with $t_{\epsilon_{
\ell_k}} \to t^\star \in [0,T]$.
We have 
\[
\frac{L_0(\theta(t_{\epsilon_{
\ell_k}})) - L_0(\widehat{\theta})}{
L_0(\widehat{\theta}_1)
- L_0(\widehat{\theta})}
\le 
\epsilon_{
\ell_k} 
\underset{k \to \infty}{\longrightarrow}
0
\]
and thus by continuity $L_0(\theta(t^\star)) = L_0(\widehat{\theta})$. This is a contradiction, which concludes the proof. 
\end{proof}

\begin{proof}[Proof of Proposition~\ref{pro:grad1}]

Let 
\[
\theta(t)
=
\widehat{\theta} + e^{-tS} 
\left( 
\widehat{\theta}_0 - \widehat{\theta}
\right).
\]
Then as in Lemma \ref{lemma:grad:flow}, we have
\begin{align} \label{eq:intro:S}
    \frac{d}{d t}
    L_0(\theta(t))
    = &
    \Big \langle 
    \left( 
\nabla L_0
    \right)
    (\theta(t))
    ,
  \frac{d}{d t}
    \theta(t) 
    \Big \rangle
  \notag  \\ 
    = &
     \Big \langle 
    \left( 
\nabla L_0
    \right)
    (\theta(t))
    ,
- (\nabla L)
    (\theta(t)) 
    \Big \rangle
 \notag   \\ 
    = & 
    -\mathcal{S} (\theta(t)),
\end{align}
defining
\[
\mathcal{S}(\theta)
=
 \Big \langle  
\nabla L_0
    (\theta)
    ,
    \nabla L
    (\theta)
    \Big \rangle.
\]

Let 
\[
T = \frac{1}{\rho_{\min}(S)}.
\]
Then 
\[
\| \theta(T) - \widehat{\theta} 
\| 
\le e^{-T \rho_{\min}(S)}
\| \widehat{\theta}_0 - \widehat{\theta}   \|
\le 
\frac{\| \widehat{\theta}_0 - \widehat{\theta}   \|}{2}.
\]
Then, using Lemma \ref{lemma:finite:diference:multiD}, for any $\widetilde{\theta}$ in the segment between $\theta(T)$ and $\widehat{\theta}$,
\begin{align*}
    \| \nabla L_0 (\widetilde{\theta}) \|
    = &
    \| \frac{n_0}{n} S_0 
    ( \widetilde{\theta} - \widehat{\theta}_0 )
    \| 
    \\ 
    \mbox{(convexity of Euclidean norm:)} ~ ~ ~ ~
    \le &
    \frac{n_0}{n}
    \rho_{\max}(S_0) 
   \left(
    \| 
\theta(T) - \widehat{\theta}_0
    \|
    +
    \| 
\widehat{\theta}
-\widehat{\theta}_0
    \|
    \right)
    \\ 
     \le &
     \frac{n_0}{n}
    \rho_{\max}(S_0) 
   \left(
    \| 
\theta(T) - \widehat{\theta}
    \|
    +
    2 \| 
\widehat{\theta}
-\widehat{\theta}_0
    \|
    \right) 
    \\
    \le &
3  \frac{n_0}{n}  \rho_{\max}(S_0) 
    \| 
\widehat{\theta}
-\widehat{\theta}_0
    \|.
\end{align*}
    Then, by convexity,
    \begin{align*}
       L_0(\theta(T))
       - L_0(\widehat{\theta}_0)
       \ge & 
       \frac{n_0}{2n} \rho_{\min}(S_0) 
       \| 
\theta(T) -\widehat{\theta}_0
       \|^2. 
       \\ 
       \ge &
        \frac{n_0}{8n} 
       \rho_{\min}(S_0) 
       \| 
\widehat{\theta}
-
\widehat{\theta}_0
       \|^2,
    \end{align*}
    since $\| \theta(T) - \widehat{\theta} \| 
    \le \|\widehat{\theta} - \widehat{\theta}_0\|/2$.
Also, using \eqref{eq:intro:S}, 
\[
  L_0(\theta(T))
       - L_0(\widehat{\theta}_0) 
       = 
       \int_0^T
       \frac{dL_0(\theta(t))}{d t}
       dt 
    =
       \int_0^T
       - S(\theta(t))
       dt 
       \le 
       -T \min_{t \in [0,T]}
       S(\theta(t)).
\]
Combining the two last displays,
\begin{align*}
    \min_{t \in [0,T]}
       \mathcal{S}(\theta(t)) 
       \le &
       \frac{  L_0(\widehat{\theta}_0)
       -
       L_0(\theta(T))
       }{T} 
       \\ 
       \le &
       -
       \frac{
       \frac{n_0}{n} \rho_{\min}(S_0) 
       \|\widehat{\theta} - \widehat{\theta}_0\|^2
       }{8T}
       \\ 
       =& 
       -\frac{n_0}{8n}
       \rho_{\min}(S_0)
       \rho_{\min}(S)
       \|\widehat{\theta} - \widehat{\theta}_0\|^2.
\end{align*}
Let $\overline{\theta} = \theta(\overline{t})$ with  $\displaystyle{\overline{t} \in \underset{{t \in [0,T]}}{\mathrm{argmin}} ~ S(\theta(t)) }$. For $\theta \in B( \overline{\theta},R )$, we have
\begin{align*}
    \left| 
\mathcal{S}(\theta) - \mathcal{S}(\overline{\theta})
    \right|
    = &
    \left| 
\Big \langle  
    \nabla L
    (\theta)
    ,
    \nabla L_0
    (\theta)
    \Big \rangle - 
    \Big \langle  
    \nabla L
    ( \overline{\theta})
    ,
    \nabla L_0
    ( \overline{\theta})
    \Big \rangle
    \right|
    \\ 
    =& 
     \left|  
\Big \langle  
\nabla L
    (\theta)
    ,
    \nabla L_0
    (\theta)
    -
    \nabla L_0
    ( \overline{\theta})
    \Big \rangle
    +
    \Big \langle  
\nabla L
    (\theta)
    -
        \nabla L
    ( \overline{\theta})
    ,
    \nabla L_0
    ( \overline{\theta})
    \Big \rangle
     \right|
     \\ 
     \mbox{(Lemma \ref{lemma:finite:diference:multiD}:)}
     ~ ~ ~ ~
     & \le 
     \| 
     \nabla L (\theta)
     \| 
          \frac{n_0}{n}
      \rho_{\max}(S_0) 
     \| \theta - \overline{\theta} \|
     +
      \rho_{\max}(S) 
     \| \theta - \overline{\theta} \|
      \| 
     \nabla L_0 (\overline{\theta})
     \|  
     \\
     & \le 
       R 
         \frac{n_0}{n}
         \rho_{\max}( S_0)    \| 
     \nabla L (\theta)
     \| 
     +
        R \rho_{\max}(S)  
         \| 
     \nabla L_0 (\overline{\theta})
     \| 
     \\ 
   \mbox{(Lemma \ref{lemma:finite:diference:multiD}:)}
     ~ ~ ~ ~
     & \le 
      R
      \frac{n_0}{n}
      \rho_{\max}(S_0) 
     \rho_{\max}(S) 
     \|\theta - \widehat{\theta} \|
     +
       R \rho_{\max}(S) 
       \frac{n_0}{n}
       \rho_{\max}( S_0)
     \| \overline{\theta} -
     \widehat{\theta}_0
     \|
     \\ 
     \le &
       R
       \frac{n_0}{n}
       \rho_{\max}(S_0) 
     \rho_{\max}(S) 
     \left( 
R + \| \overline{\theta} -
     \widehat{\theta}
     \|
     +
     \| \overline{\theta} -
     \widehat{\theta}_0
     \|
     \right).
\end{align*}

We recall 
\[
\theta(t)
=
\widehat{\theta} + e^{-tS} 
\left( 
\widehat{\theta}_0 - \widehat{\theta}
\right).
\]
Hence, $\| \theta(t) - \widehat{\theta} \| \le \| \widehat{\theta}_0 - \widehat{\theta}\|$ and $\| \theta(t) - \widehat{\theta}_0 \| \le 2 \| \widehat{\theta}_0 - \widehat{\theta}\|$. Thus we have  
\[
    \left| 
\mathcal{S}(\theta) - \mathcal{S}(\overline{\theta})
    \right|
    \le 
        R
        \frac{n_0}{n}
        \rho_{\max}( S_0) 
     \rho_{\max}(S) 
     \left( 
R + 3 \| \widehat{\theta}_0 - \widehat{\theta}\|
     \right).
\]
Hence, let us take $R$ as in \eqref{eq:lower:lower:bound:R}, with in particular $R \le \| \widehat{\theta}_0 - \widehat{\theta}\|$.  Then to satisfy  $\left\langle \nabla L(\theta), \nabla L_0(\theta)\right\rangle  \leq  0$ for all $\theta \in B(  \overline{\theta} , R )$, it is sufficient that 
\[
    R
    \frac{n_0}{n}
    \rho_{\max}( S_0) 
     \rho_{\max}(S) 
     \left( 
R + 3 \| \widehat{\theta}_0 - \widehat{\theta}\|
     \right)
< 
\frac{1}{8}
\frac{n_0}{n}
       \rho_{\min}(S_0)
       \rho_{\min}(S)
       \|\widehat{\theta} - \widehat{\theta}_0\|^2.
\]
For this it is sufficient that
    \[
   32 R
   \frac{n_0}{n}
   \rho_{\max}(S_0) 
     \rho_{\max}(S) 
     \| \widehat{\theta}_0 - \widehat{\theta}\|
<
\frac{n_0}{n}
       \rho_{\min}(S_0)
       \rho_{\min}(S)
       \|\widehat{\theta} - \widehat{\theta}_0\|^2\]
which is implied by
\[
R 
<
\frac{\rho_{\min}(S_0)
       \rho_{\min}( S)}{
       32
        \rho_{\max}(S_0) 
     \rho_{\max}(S) 
       }
        \|\widehat{\theta} - \widehat{\theta}_0\|.
\]
This concludes the proof.
\end{proof}

\newpage

\section{Additional experiments on CIFAR-10}
\label{sec:additional_experiments}

In this section,
in \Cref{fig:resnet50,fig:vgg19_cifar10,{fig:vgg11_cifar10}},
we provide complementary results for additional architectures on the Imbalanced CIFAR-10 benchmark. 
We report detailed training and test dynamics (loss and accuracy) across different subgroup imbalance levels 
($\zeta \in \{1\%, 10\%, 30\%\}$) and threshold $\kappa = 90\%$. 
These figures illustrate that the qualitative behavior observed in the main paper is consistent across models of varying depth and capacity. 

\begin{figure}[H]
\centering
\includegraphics[width=\linewidth]{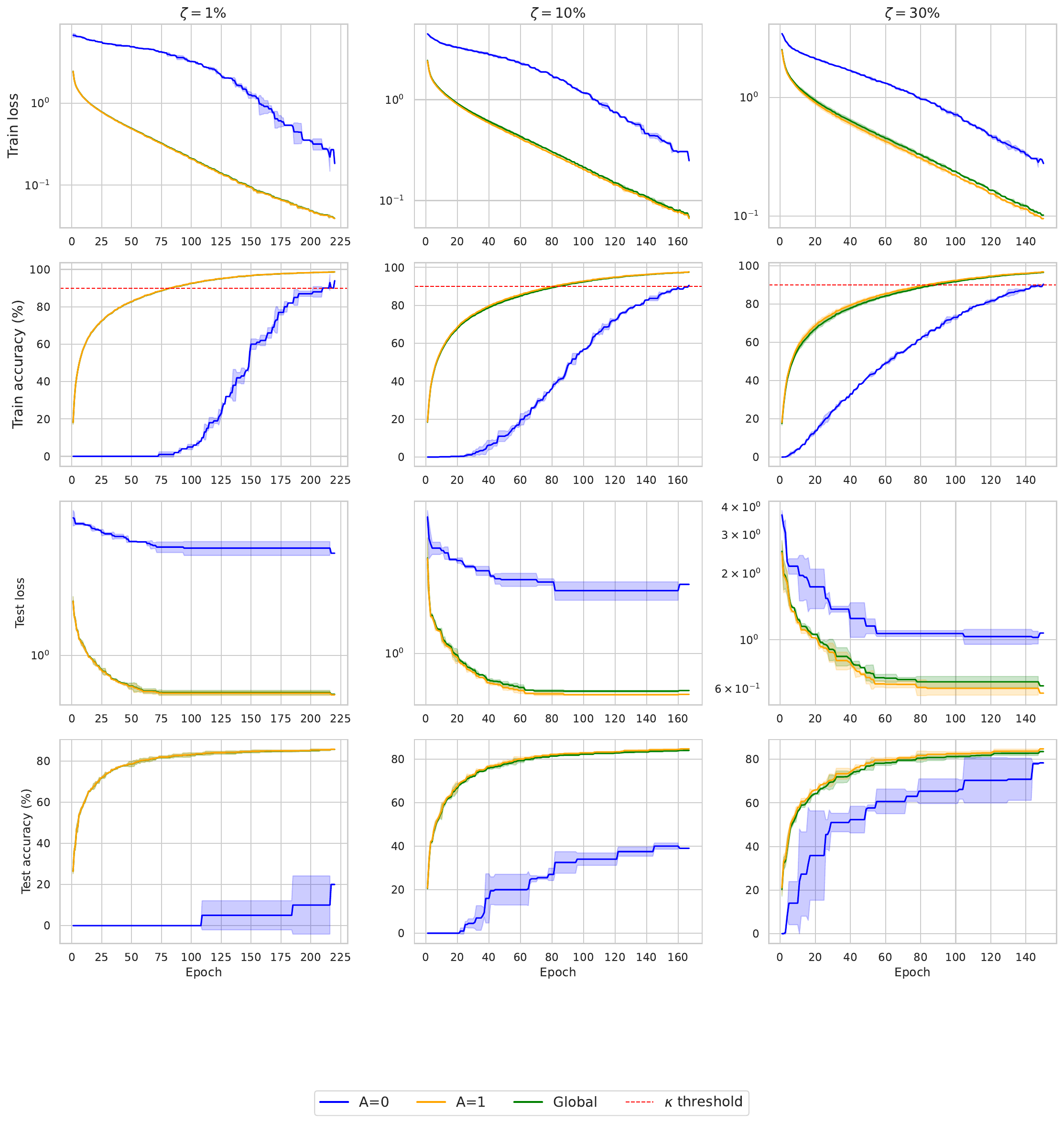}
\caption{Training and test loss/accuracy dynamics for ResNet50 on CIFAR-10 across imbalance scenarios ($\zeta \in \{1\%, 10\%, 30\%\}$) with threshold $\kappa = 90\%$.
Minority accuracy 
 lags behind global and majority early, then catches up on both train and test. Compared to VGG-19  the delay to reach $\kappa$ is shorter, hence a lower catch-up overcost. Capacity mitigates, but does not remove, the extra training required for minority awareness.}\label{fig:resnet50}
\end{figure}

\newpage

\begin{figure}[H]
\centering
\includegraphics[width=\linewidth]{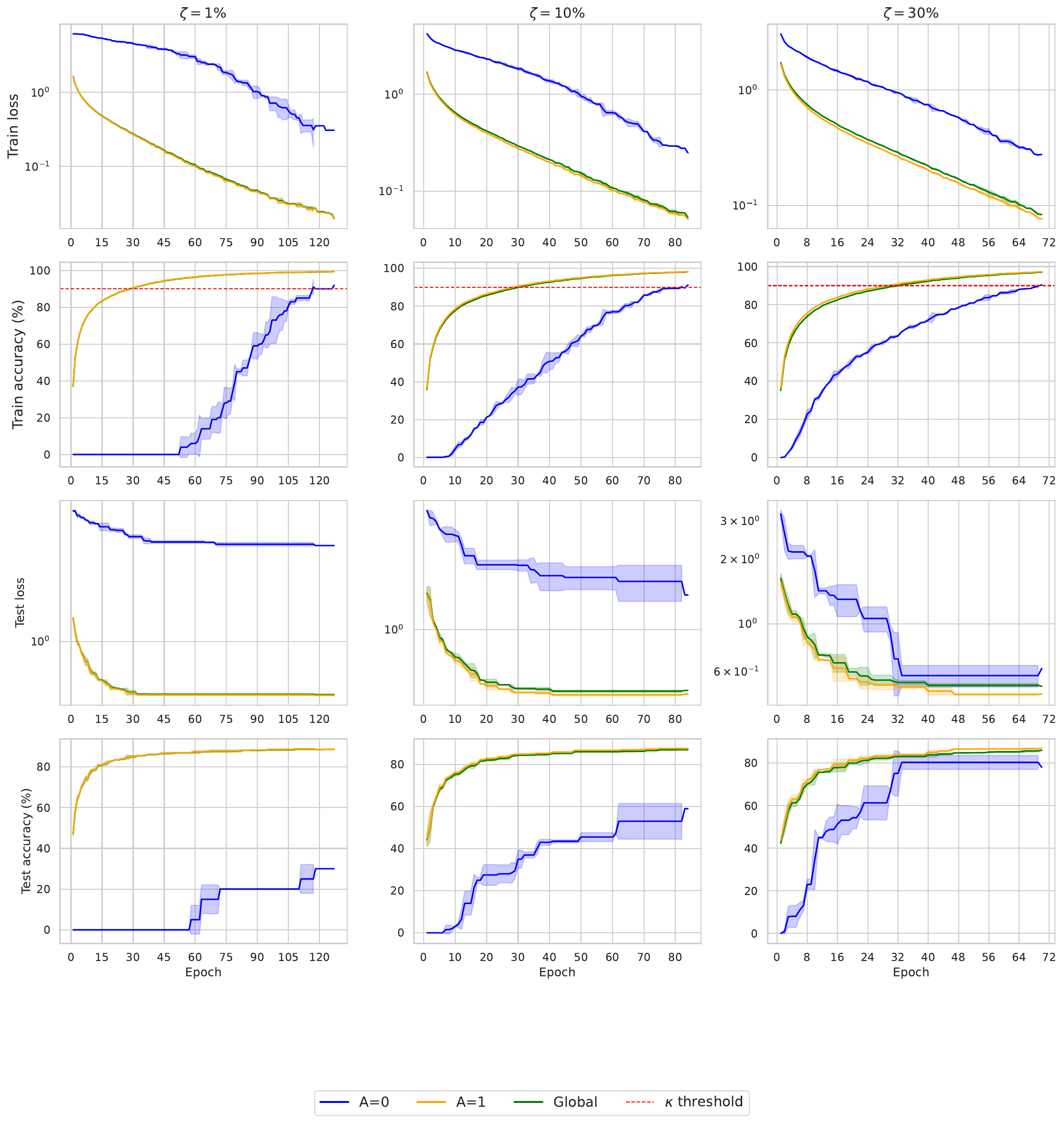}
\caption{Training and test loss/accuracy dynamics for VGG19 on CIFAR-10 across imbalance scenarios ($\zeta \in \{1\%, 10\%, 30\%\}$) with threshold $\kappa = 90\%$.  VGG19 is much bigger than  ResNet-50 and does faster minority learning. However one sees a higher catch-up overcost (e.g.  280\% VGG19 vs  157\% ResNet50 for the 1\% imbalance scenario) underscoring the role of architecture in the extra training needed to achieve minority awareness.
}
\label{fig:vgg19_cifar10} 
\end{figure}
\newpage

\begin{figure}[H]
\centering
\includegraphics[width=\linewidth]{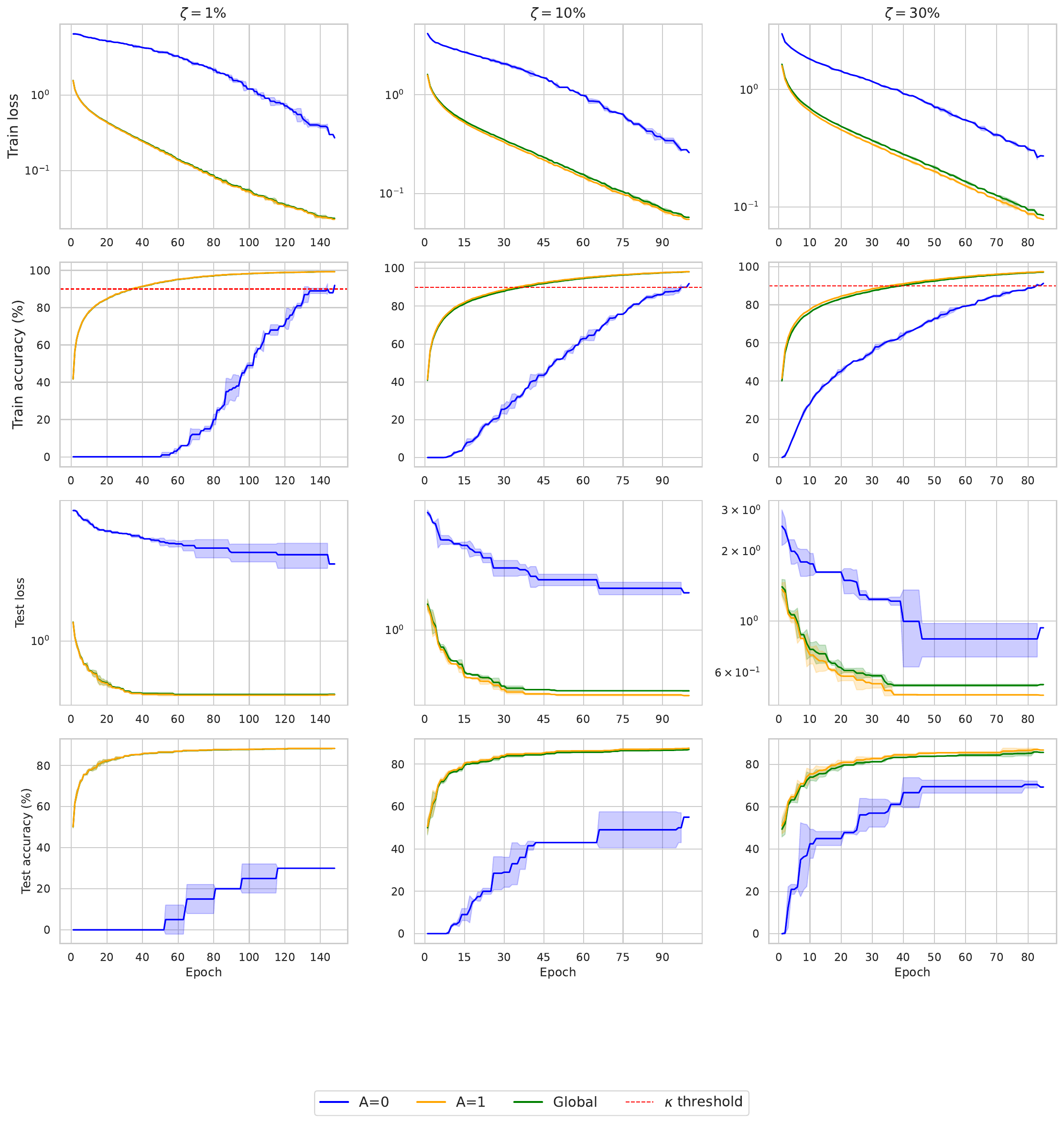}
\caption{Training and test loss/accuracy dynamics for VGG11 on CIFAR-10 across imbalance scenarios ($\zeta \in \{1\%, 10\%, 30\%\}$) with threshold $\kappa = 90\%$. 
Training and test dynamics display a clear minority (\(A=0\)) delay to the \(\kappa\) threshold, longer than with ResNet-50 and comparable or slightly worse than VGG-19.
Consistently with Table~2, the catch-up overcost at \(\kappa=90\%\) is high (e.g., \(291\%\), \(171\%\), \(114\%\) for \(\zeta=1\%,10\%,30\%\)), underscoring the role of architecture in the additional training required to attain minority awareness (cf.~\Cref{fig:resnet50,fig:vgg19_cifar10}).}
\label{fig:vgg11_cifar10}
\end{figure}

\newpage
\subsection{Impact of the optimizer: AdamW}
\label{sec:adamw_experiments}

While the main experiments in the paper use standard SGD without momentum, 
we also investigated the impact of an adaptive optimizer. 
In particular, we repeated the same Imbalanced CIFAR-10 protocol using AdamW with learning rate $\eta = 1 \times 10^{-3}$, weight decay $1 \times 10^{-2}$, and a cosine annealing scheduler. The loss function was standard cross-entropy. 

Minority accuracy (\(A=0\)) shows the same early delay observed with SGD: global and majority (\(A=1\)) reach \(\kappa\) first, and the minority catches up later on train and test. AdamW often reaches the global threshold sooner, yet the \emph{catch-up overcost} remains of comparable magnitude ---large under strong imbalance (about \(400\%\) at \(\zeta=1\%\)) and decreasing as \(\zeta\) grows. As reported in Table~\ref{tab:fairness_overcost_adamw}, the overcost remains substantial across imbalance levels; changing the optimizer does not eliminate bias amplification, while its magnitude can vary with model capacity (see, e.g., \cite{adam_imbalance_langage}).

\begin{table}[h!]
\centering
\caption{Catch-up overcost (in $\%$) with AdamW on imbalanced CIFAR-10. 
Reported values are means over 3 independent runs across imbalance levels $\zeta \in \{1\%, 10\%, 30\%\}$.}
\label{tab:fairness_overcost_adamw}
\setlength{\tabcolsep}{6pt}
\renewcommand{\arraystretch}{1.2}
{\fontsize{9}{11}\selectfont
\begin{tabular}{l c ccc}
\toprule
\textbf{Model} & \textbf{Parameters} & 1\% & 10\% & 30\% \\
\midrule
MobileNetV2 \cite{Sandler2018MobileNetV2} & 543K  & 326 & 310 & 214 \\
VGG11 \cite{Simonyan2015VGG}              & 9M    & 401 & 244 & 169 \\
ResNet18 \cite{He2016ResNet}              & 11M   & 465 & 342 & 209 \\
ResNet50                                  & 25M   & 369 & 327 & 219 \\
ResNet101                                 & 42M   & 220 & 192 & 172 \\
\bottomrule
\end{tabular}
}
\end{table}

\section{Additional experiments on Adult}
\label{sec:additional_adult}

To complement our deep learning results, we also ran an XGBoost logistic regression on Adult. 
The model was trained incrementally by adding batches of 10 trees at each iteration 
(using the \texttt{xgb\_model} argument to continue training from the previous booster). 
Each step records cumulative training time and subgroup accuracies for $A=0$ (minority) and $A=1$ (majority). 
The optimizer and objective are handled internally by XGBoost (\texttt{eval\_metric="logloss"}). 

Despite the very different model class, we observe the same qualitative behavior: a pronounced minority  delay to the \(\kappa\) threshold, followed by a late catch-up visible on train and test. Changing the learning rate, tree depth, or regularization alters the number of boosting rounds needed to reach the threshold, but the \emph{relative} catch-up overcost remains large under strong imbalance (about \(400\%\) at \(\zeta=1\%\)) and declines as \(\zeta\) grows. This shows the phenomenon is model-class robust, extending beyond deep networks.

\begin{table}[H]
\centering
\caption{Incremental training with XGBoost logistic regression on Adult. 
We report global accuracy and subgroup accuracies for $A=0$ (minority) and $A=1$ (majority) 
on both train and test sets as the number of trees increases.}
\label{tab:xgb_adult}
\setlength{\tabcolsep}{6pt}
\renewcommand{\arraystretch}{1.15}
{\fontsize{9}{11}\selectfont
\begin{tabular}{c cccc ccc}
\toprule
\multirow{2}{*}{\textbf{Training time (s)}} & 
\multicolumn{3}{c}{\textbf{Train accuracy}} & 
\multicolumn{3}{c}{\textbf{Test accuracy}} \\
\cmidrule(lr){2-4} \cmidrule(lr){5-7}
& Global & $A=0$ & $A=1$ & Global & $A=0$ & $A=1$ \\
\midrule
0.47  & 0.8772 & 0.5530 & 0.8893 & 0.8679 & 0.5263 & 0.8807 \\
7.70  & 0.8989 & 0.6734 & 0.9074 & 0.8679 & 0.5996 & 0.8781 \\
16.36 & 0.9098 & 0.7235 & 0.9168 & 0.8668 & 0.5940 & 0.8771 \\
26.45 & 0.9190 & 0.7623 & 0.9249 & 0.8649 & 0.5921 & 0.8752 \\
38.03 & 0.9261 & 0.7987 & 0.9308 & 0.8624 & 0.5977 & 0.8724 \\
51.11 & 0.9312 & 0.8205 & 0.9353 & 0.8606 & 0.5996 & 0.8705 \\
65.64 & 0.9359 & 0.8464 & 0.9393 & 0.8589 & 0.5959 & 0.8688 \\
82.94 & 0.9385 & 0.8529 & 0.9417 & 0.8579 & 0.5959 & 0.8678 \\
\bottomrule
\end{tabular}
}
\end{table}

\section{Experimental details}
\label{sec:experiments_detail}

This section provides full details to ensure reproducibility of our experiments. We describe hardware specifications, training hyperparameters, implementation details, and evaluation protocol for each dataset and model used.

\subsection{Datasets}
\label{sec:datasets_detail}

We conduct experiments on a mix of image and tabular datasets with varying levels of class imbalance. Below, we describe the construction and preprocessing steps for each dataset used in our study.

\paragraph{CIFAR-10.}
We use the standard CIFAR-10 dataset, consisting of 60,000 color images (32×32 pixels) in 10 classes, with 50,000 training and 10,000 test samples. To induce group imbalance, we define a binary sensitive attribute $A \in {0,1}$, following the approach detailed in \Cref{sec:experiments}.

\paragraph{CIFAR-2.}
We consider a binary classification task derived from CIFAR-10 by selecting the two vehicle-related classes “automobile” and “truck”. We refer to this subset as CIFAR-2. To simulate a highly imbalanced scenario, we drastically reduce the number of “automobile” (car) samples to a small fraction of their original count (e.g., retaining only $3\%$), while keeping all “truck” examples. This creates a pronounced majority-minority setting, suitable for studying bias amplification under imbalance.

\paragraph{EuroSAT.}
EuroSAT is a land use and land cover classification dataset based on Sentinel-2 satellite images. We use the RGB version comprising 27,000 labeled images across 10 classes. For our binary classification task, we select two visually distinct classes: \emph{Highway} and \emph{River}. The input images are resized to 64×64 pixels and normalized. We define a binary sensitive attribute $A$ by thresholding the average blue-channel intensity to distinguish between “bluish” and “non-bluish” images, following the approach of \cite{risser2023detecting}. 

\paragraph{Adult.}
The Adult dataset is a standard benchmark for fairness and tabular learning. It contains approximately 48,000 examples with demographic and income information. We treat the binary income variable as the label and use “gender” (male vs. female) as the sensitive attribute $A$.

\subsection{Hardware and runtime}

Experiments were conducted on a computing cluster equipped with NVIDIA A100 40GB GPUs. Each experiment ran on a single GPU unless otherwise specified. Average runtime per training run is reported in Table~\ref{tab:runtime}.

\begin{table}[h]
\centering
\caption{Average training time per run across datasets and models.}
\label{tab:runtime}
\renewcommand{\arraystretch}{1.1}
\begin{tabular}{lccc}
\toprule
\textbf{Dataset} & \textbf{Model} & \textbf{Runtime (h)} & \textbf{GPU} \\
\midrule
CIFAR-10     & ResNet-18     & 0.26  & A100 \\
EuroSAT      & ResNet-18     & 0.1  & A100 \\
Adult        & TabNet        & 0.16  & A100 \\
\bottomrule
\end{tabular}
\end{table}

\subsection{Optimization and training}

We use SGD with a constant learning rate for image models and tabular data. In order to match our theoretical setting, no weight decay or learning rate decay schedule was applied. Models were trained from scratch without pretaining. Refer to \Cref{tab:optim} for more details.

\begin{table}[h]
\centering
\caption{Optimization hyperparameters for each task.}
\label{tab:optim}
\renewcommand{\arraystretch}{1.1}
\begin{tabular}{lccc}
\toprule
\textbf{Dataset} & \textbf{Model} & \textbf{Optimizer} & \textbf{Learning rate}  \\
\midrule
CIFAR-10     & ResNet-18     & SGD   & $1 \times 10^{-2}$ \\
CIFAR-10     & VGG19         & SGD   & $1 \times 10^{-2}$  \\
EuroSAT      & ResNet-18     & SGD   & $1 \times 10^{-4}$  \\
Adult        & TabNet        & SGD  & $2 \times 10^{-2}$ \\
\bottomrule
\end{tabular}
\end{table}

\subsection{Evaluation and reporting}

All results are averaged over 3 random seeds. We report mean and standard deviation of accuracy and loss metrics across groups. Class imbalance ratios $\zeta$ are detailed in the main text (\Cref{sec:experiments}).

\subsection{Reproducibility}

All code and configuration files (including seed control, training logs, and plotting scripts) are available at \url{https://github.com/ryanboustany/bias_amplification}. We follow best practices for reproducible research and ensure all experimental figures can be regenerated with a single command.

\end{document}